\newcommand{\indicator}{\mathds{1}}
\newcommand{\buh}{\hat{\bu}}
\newcommand{\uh}{\hat{u}}
\newcommand{\new}[1]{#1}
\newcommand{\newnew}[1]{#1}
\definecolor{midgreen}{rgb}{0, 0.75, 0}
\newcommand{\yh}{\hat{y}}
\newcommand{\ph}{\hat{p}}
\newcommand{\byh}{\hat{\by}}
\newcommand{\bph}{\hat{\bp}}
\DeclareMathOperator{\SDC}{SDC}
\newcommand{\sdc}{\widetilde{d}}
\DeclareMathOperator{\SDL}{SDL}
\DeclareMathOperator{\IDC}{IDC}
\DeclareMathOperator*{\argmax}{arg\,max}
\DeclareMathOperator*{\argmin}{arg\,min}
\newcommand{\R}{\mathbb{R}}
\newcommand{\RR}{\mathbb{R}}
\newcommand{\bzero}{\boldsymbol{0}}
\newcommand{\calN}{\mathcal{N}}
\newcommand{\calX}{\mathcal{X}}
\newcommand{\calY}{\mathcal{Y}}
\newcommand{\be}{\mathbf{e}}
\newcommand{\bp}{\mathbf{p}}
\newcommand{\bu}{\mathbf{u}}
\newcommand{\bw}{\mathbf{w}}
\newcommand{\by}{\mathbf{y}}
\newcommand{\meanstd}[2]{\makecell[tc]{#1\\[-1ex]{\tiny$\pm$#2}}}
\theoremstyle{thmstyleone}%
\newtheorem{theorem}{Theorem}
\newtheorem{proposition}[theorem]{Proposition}%
\newtheorem{lemma}[theorem]{Lemma}%
\newtheorem{corollary}[theorem]{Corollary}%
\theoremstyle{thmstyletwo}%
\newtheorem{example}{Example}%
\newtheorem{remark}{Remark}%
\theoremstyle{thmstylethree}%
\begin{document}

\title[Article Title]{Soft Dice Confidence: A Near-Optimal Confidence Estimator for Selective Prediction in Semantic Segmentation}


\author[1]{\fnm{Bruno} \sur{L.~C.~Borges}}\email{bruno.laboissiere@posgrad.ufsc.br}

\author[2,3]{\fnm{Bruno} \sur{M.~Pacheco}}\email{bruno.machado.pacheco@umontreal.ca}
\equalcont{These authors contributed equally to this work.}

\author*[1]{\fnm{Danilo} \sur{Silva}}\email{danilo.silva@ufsc.br}
\equalcont{These authors contributed equally to this work.}

\affil[1]{\orgdiv{Department of Electrical and Electronic Engineering}, \orgname{Federal University of Santa Catarina}, \orgaddress{\city{Florianópolis}, \country{Brazil}}}

\affil[2]{\orgdiv{Department of Automation and Systems Engineering}, \orgname{Federal University of Santa Catarina}, \orgaddress{\city{Florianópolis}, \country{Brazil}}}

\affil[3]{\orgdiv{CIRRELT and Département d'Informatique et de Recherche Opérationnelle}, \orgname{Université de Montréal}, \orgaddress{\city{Montréal}, \country{Canada}}}

\abstract{    
In semantic segmentation, even state-of-the-art deep learning models fall short of the performance required in certain high-stakes applications such as medical image analysis. In these cases, performance can be improved by allowing a model to abstain from making predictions when confidence is low, an approach known as selective prediction. While well-known in the classification literature, selective prediction has been underexplored in the context of semantic segmentation. This paper tackles the problem by focusing on image-level abstention, which involves producing a single confidence estimate for the entire image, in contrast to previous approaches that focus on pixel-level uncertainty.
Assuming the Dice coefficient as the evaluation metric for segmentation, two main contributions are provided in this paper: (i) In the case of known marginal posterior probabilities, we derive the optimal confidence estimator, which is observed to be intractable for typical image sizes. Then, an approximation computable in linear time, named Soft Dice Confidence (SDC), is proposed and proven to be tightly bounded to the optimal estimator. (ii) When only an estimate of the marginal posterior probabilities are known, we propose a plug-in version of the SDC and show it outperforms all previous methods, including those requiring additional tuning data. These findings are supported by experimental results on both synthetic data and real-world data from six medical imaging tasks, including out-of-distribution scenarios, positioning the SDC as a reliable and efficient tool for selective prediction in semantic segmentation.
}

\keywords{Selective prediction, confidence estimation, semantic segmentation, Dice coefficient, medical imaging}



\maketitle

\section{Introduction}

Semantic segmentation, especially when powered by deep learning, has become a crucial tool in various domains, particularly in medical imaging. \new{In this setting, the task consists of assigning a semantic label to each individual pixel (or voxel) of an input image, producing a structured output that has the same spatial resolution as the input. Unlike image-level classification, which outputs a single label per image, this dense prediction paradigm captures fine-grained spatial information and object boundaries, making it particularly suitable for medical image analysis, where precise localization of anatomical structures or lesions is required.}
Its applications include
tumor detection, organ delineation, and the identification of pathological regions within medical images \citep{li_medical_2021, bajwa_g1020_2020, codella_skin_2018}, which
are critical not only for diagnostic correctness but also for planning and monitoring treatment.
However, in many applications, deep learning models are not robust enough to completely replace manual segmentation.

Selective prediction~\citep{el-yaniv_foundations_2010,geifman_selective_2017,geifman_bias-reduced_2019,geifman_selectivenet_2019} \new{(also known as reject option \cite{Hendrickx.etal.2024.Machine-Learning-Reject} and learning to reject~\cite{Zhang.etal.2023.Survey-Learning-Reject})} has emerged as a promising approach to increase the reliability of machine learning models while minimizing human intervention.
A selective prediction model either provides a prediction (the prediction is accepted) or abstains from it (the prediction is rejected).
The abstention is guided by a confidence score function, which ideally provides a higher value for predictions that are of higher quality or that are more likely to be correct.
\new{
The confidence value for each prediction is compared to a predefined acceptance threshold, so that
predictions exceeding the threshold can be accepted with limited human oversight, while low-confidence cases are deferred to a human expert. As a result, expert effort is focused on the most uncertain or error-prone instances, enabling the system to preserve overall output quality while substantially reducing the amount of manual review required from a specialist.}

\new{
Most of the literature in selective prediction focuses on classification tasks, in which case it is referred to as selective classification or classification with a reject option.
Since semantic segmentation corresponds to pixel-wise classification, it could be tempting to apply pixel-wise selective classification on a semantic segmentation task, abstaining from or committing to predictions for individual pixels of a given image.
While straightforward, this approach would present significant practical limitations in high-stakes applications.
In particular, rejecting individual pixels typically results in partial segmentations, which would still require a human specialist to analyze the entire image to ensure correctness. In other words, accepting some pixels of an image while rejecting others does not directly translate into reduced human workload.

In contrast, image-level selective prediction operates at the granularity at which human intervention naturally occurs: the entire image. By associating a single confidence score with each predicted segmentation, image-level abstention enables a clear and operationally meaningful trade-off between automation and manual review, 
where accepted predictions may be used with minimal human inspection, while rejected cases are fully delegated to a specialist.
}
These benefits are particularly appealing to medical image segmentation, as the large size and high resolution of medical images 
make human intervention
time-consuming and costly.

While ideal confidence score functions for classification tasks are well known since the foundational work of \citet{chow_1957,chow_1970} (and have been recently expanded upon by \citet{franc_2023}), semantic segmentation tasks have not received the same treatment.
Current literature on the topic mostly focused on adapting classification confidence score functions for segmentation tasks, resulting in pixel-level~\citep{nair_exploring_2020} or component-level~\citep{molchanova_novel_2022} uncertainty estimates, and aggregating such scores into a single value~\citep{kahl_values_2024,jungo_analyzing_2020}.
Furthermore, the performance of these approaches has only been assessed through limited empirical evaluations, lacking theoretical guarantees.

In this paper, we address these gaps by first formalizing the theoretical problem of selective prediction for semantic segmentation, focusing on image-level abstention.
\new{%
Assuming known marginal posterior probabilities, we build on the theoretical foundations of selective classification to derive the ideal confidence estimator for semantic segmentation under the Dice metric.
Observing that the ideal estimator is intractable for real-world input sizes, we propose a linear-time approximation,%
}
termed Soft Dice Confidence (SDC), and prove that it has a tightly bounded difference from the ideal estimator. 

\new{%
In the more practical case when only an estimate of the marginal posterior probabilities is available, we propose a plug-in version of the SDC.
Then, our theoretical findings are corroborated} by a comprehensive set of experiments using synthetic data---where we have full knowledge of the underlying distributions---as well as real-world data from six medical image segmentation tasks.
Importantly, three of the medical imaging tasks are also assessed under distribution shift conditions, reflecting a real-world scenario where training and deployment data may not share the same distribution.
The results show that SDC significantly outperforms existing candidates and remains competitive even
in comparison to confidence estimators that require additional held-out data for tuning.

The code to reproduce our results is available at \url{https://github.com/gama-ufsc/soft_dice_confidence}.




\section{Preliminaries}\label{sec:background}

Let $P$ be a distribution over $\calX \times \calY$, where $\calX$ is the input space and $\calY$ is the label space, with $p_X$ denoting the marginal density over $\calX$. Let $h: \calX \to \calY$ be a predictive model. For a given loss function $\ell : \mathcal{Y}\times \mathcal{Y} \to \R^{+}$, the \emph{risk} of $h$ is defined as
\begin{equation}\label{eq:risk}
R(h) = \mathbb{E}_{x,y \sim P}\left[ \ell(y,h(x)) \right].
\end{equation}

\subsection{Selective prediction}\label{ssec:selective_prediction}

A \textit{selective model} \citep{el-yaniv_foundations_2010,geifman_selective_2017} is a pair $(h, s)$, where $h: \calX \to \calY$ is a predictive model and $s: \calX \to \{0,1\}$ is a \textit{selection function}. Given an input $x \in \calX$, the selective model outputs a prediction $h(x)$ if $s(x) = 1$ (the prediction
is \textit{accepted}), otherwise it abstains (the prediction
is \textit{rejected}).
Formally, we have 
\begin{equation}\label{eq:selective-model}
(h,s)(x) = \begin{cases}
h(x),				&\text{if $s(x) = 1$}, \\
\text{reject}, 	&\text{if $s(x) = 0$}.
\end{cases}
\end{equation}
A selective model's \textit{coverage} $\phi(s) = \mathbb{E}_{x \sim p_X}[s(x)]$ is the probability mass of the accepted predictions, while its \textit{selective risk} 
\begin{equation}
R(h,s) = \mathbb{E}_{x,y \sim P}[\ell(y, h(x)) \mid s(x) = 1]
\end{equation}
is the risk of the predictive model restricted to the accepted predictions.
In particular, the conventional risk equals the selective risk at \textit{full coverage} (when $\phi(s) = 1$).

Let $\indicator[\cdot]$ denote the indicator function. Without loss of generality, we assume $s(x) = \indicator[g(x) \geq t]$, where $g: \calX \rightarrow \RR$ is a \textit{confidence score function} (also known as a \textit{confidence estimator}), which quantifies the model's confidence on each prediction, and $t \in \RR$ is an acceptance threshold.
By varying $t$, it is generally possible to trade off coverage for selective risk, i.e., a lower selective risk can usually (but not necessarily always) be achieved if more predictions are rejected. \new{In the context of a varying threshold, the selective model is denoted by the pair $(h, g)$.}

The trade-off between coverage and selective risk is captured by the \textit{risk-coverage (RC) curve} \citep{el-yaniv_foundations_2010,geifman_selective_2017}, which depicts $R(h,s)$ as a function of $\phi(s)$. \new{An RC curve characterizes the behavior of a selective model $(h,g)$ across all possible operating points, from full coverage (no rejection) to low coverage (aggressive rejection of low-confidence predictions).} 
A widely used scalar metric that summarizes this curve is the \textit{area under the RC curve} (AURC) \citep{geifman_bias-reduced_2019,ding_revisiting_2020}\new{, defined as the integral of the selective risk over the range of achievable coverage values induced by varying the acceptance threshold. Intuitively, the AURC measures the average selective risk over coverages between 0 and 100\%;
thus, it jointly reflects both the quality of the underlying predictive model and the effectiveness of the confidence score function in prioritizing high-quality predictions. Lower AURC values indicate an overall lower selective risk, and thus better selective prediction performance.}

In practice, coverage and selective risk can be evaluated empirically given a test dataset $\{(x^{(i)}, y^{(i)})\}_{i=1}^N$ drawn i.i.d.\ from~$P$, yielding the \textit{empirical coverage} $\hat{\phi}(s) = \frac{1}{N}\sum_{i=1}^N \indicator[s(x^{(i)})]$ and the \textit{empirical selective risk}
\begin{equation}
\label{selective_risk}
\hat{R}(h,s) = \frac{\frac{1}{N}\sum_{i=1}^N\ell\bigl(y^{(i)},h(x^{(i)})\bigr)s(x^{(i)})}{\hat{\phi}(s)}.
\end{equation}

\subsection{Optimal confidence score functions for selective classification}
\label{ssec:optimal-strategies}

Let $\calY$ be finite. Then $h$ is a classifier, and selective prediction reduces to selective classification.
Let $p_Y(\cdot|x)$ denote the probability mass function of the label conditioned on input $x$.
Assume that $h$ is fixed and that we wish to find a confidence score function $g$ and corresponding threshold $t$ minimizing the selective risk for a given coverage.

For all $x \in \calX$, let
\begin{equation}\label{eq:conditional-risk}
r(x) = \sum_{y \in \calY} p_Y(y|x) \ell(y,h(x))
\end{equation}
denote the \textit{conditional risk} and assume that, for any $\beta \in \RR$, the set $\{x \in \calX: r(x) = \beta\}$ has zero probability mass, which is usually the case when $p_X$ is continuous \citep{franc_2023}.

It is shown in \citep{franc_2023} that, for any given target coverage $0 < c \leq 1$, a selection function that minimizes $R(h, s)$ subject to $\phi(s) \geq c$ is given by $s(x) = \indicator[g(x) \geq t]$, where $g: \calX \to \RR$ is any function such that $r(x) < r(x') \implies g(x) > g(x')$ for all $x, x' \in \calX$, and $t \in \RR$ is the largest value such that $\phi(s) \geq c$.

In the special case of the 0-1 loss function, defined as $\ell_\text{0-1}(y, \yh) = \indicator[\yh \neq y]$ for all $y,\yh \in \calY$, the selective risk becomes
\begin{equation}
r(x) = 1 - p_Y(h(x)|x).
\end{equation}
Thus, an optimal confidence score function is given by
\begin{equation}
g(x) = p_Y(h(x) | x)
\end{equation}
which we can interpret as the posterior probability of the predicted class. In particular, for the Bayes-optimal (maximum a posteriori) classifier 
\begin{equation}
h(x) = \argmax_{y \in \calY} p_Y(y|x)
\end{equation}
an optimal confidence score function is given by
\begin{equation}
g(x) = \max_{y \in \calY} p_Y(y|x).
\end{equation}
This optimal choice of $(h, g)$ was first proved by \citet{chow_1957,chow_1970} under a cost-based formulation and is known as Chow's rule.

In general, it is well-known that the conditional risk can be used to derive the Bayes-optimal classifier
\begin{equation}\label{eq:optimal-classifier}
h(x) = \argmin_{\yh \in \calY} \sum_{y \in \calY} p_Y(y|x) \ell(y,\yh),
\end{equation}
however, the result in \citep{franc_2023} holds for any classifier $h$.

\subsection{The plug-in approach}

In practice, the true class posterior distribution $p_Y(y|x)$ is unknown and only an estimate $\ph_Y(y|x)$ can be obtained from training data. A common approach is to substitute $\ph_Y(y|x)$ for $p_Y(y|x)$ in the expression for conditional risk~\eqref{eq:conditional-risk} (as well as in the expression for the optimal classifier~\eqref{eq:optimal-classifier}), a technique referred to as the \textit{plug-in rule}. 

For the 0-1 loss, this results in the \textit{Maximum Class Probability} (MCP) selective classifier given by the MCP classifier
\begin{equation}
h(x) = \argmax_{y \in \calY} \ph_Y(y|x)
\end{equation}
and the MCP confidence estimator
\begin{equation}\label{eq:MCP-confidence-estimator}
g(x) = \max_{y \in \calY} \ph_Y(y|x).
\end{equation}
Since $\ph_Y(y|x)$ is often computed using a softmax function, \eqref{eq:MCP-confidence-estimator} is also known as the \textit{Maximum Softmax Probability} (MSP).

\new{
\subsection{Learning to reject}

Selective classification is historically known as \textit{classification with a reject option} \cite{Hendrickx.etal.2024.Machine-Learning-Reject}, a concept dating back to the seminal work of Chow (1957--1970) \cite{chow_1957,chow_1970}, while the modern terminology \textit{selective classification/prediction} was introduced later in \cite{el-yaniv_foundations_2010,geifman_selective_2017,geifman_bias-reduced_2019,geifman_selectivenet_2019}. The field is also known as \textit{learning to reject} \cite{Zhang.etal.2023.Survey-Learning-Reject}, which emphasizes that the rejector (the conceptual opposite of the selection function) may in general be learned from data. The difference in terminology also reflects a difference in approach: the literature on learning to reject typically follows a cost-based formulation (each rejection incurs a cost), while selective prediction imposes a constraint on either the selective risk or the coverage of the model. However, it is shown in \cite{franc_2023} that all these formulations are essentially equivalent from a mathematical perspective (although, in practice, risk and coverage are arguably easier to interpret and specify than rejection costs \cite{Zhang.etal.2023.Survey-Learning-Reject}).

We further distinguish between two approaches to obtain a rejector: joint training, where the rejector is learned jointly with the prediction model; and a post-training or \textit{post-hoc} approach, where the predictor is first trained and the rejector is subsequently learned or designed. A challenge with joint training is how to fairly compare rejection strategies, since differences in performance may arise from differences in the underlying prediction models rather than from the rejectors themselves. Post-hoc approaches avoid this issue and are attractive in practice when the rejector is simpler than the predictor and retraining is costly. However, they typically require additional \mbox{tuning/calibration} data; for instance, if the rejector is based on posterior probability estimates produced by an overfitted model, then the training data is typically unsuitable for learning the rejector, requiring hold-out data not used for training.
}


\section{Problem statement}\label{sec:problem_statement}

In this paper, we consider the problem of image-level selective prediction for the predictive task of binary semantic segmentation.

In binary semantic segmentation, the input $x \in \calX = \RR^n$ is an image and the target variable $y \in \calY=\{0, 1\}^{n}$ is the corresponding ground truth segmentation mask, where $n$ denotes the total number of pixels (or voxels) in an image.\footnote{For concreteness, we have focused on the task of binary semantic segmentation; however, none of our results require $\calX$ to have the same dimensionality of $\calY$. We thus could consider the more general problem of binary multilabel classification where $\calX$ is arbitrary. However, our theoretical results in Section~\ref{sec:theoretical-results} do require the evaluation metric to be \textit{decomposable}, i.e., expressible as an expectation of individual losses, as in \eqref{eq:risk}. This excludes many common evaluation metrics for multilabel problems such as micro, macro and weighted averages across labels.} 
For convenience, we denote the target variable $y$ as a vector $\by=(y_1,\ldots,y_n) \in \calY$ and we use $p_{Y_i}(1|x) = \sum_{\by \in \{0,1\}^n: y_i=1} p_Y(\by|x)$ to denote the marginal probability that $y_i=1$ conditioned on the input image $x$. The set of all indices $i \in \{1,\ldots,n\}$ such that $y_i=1$ (respectively, $y_i=0$) is referred to as the \textit{foreground} (\textit{background}) of the image. 

To evaluate the performance of a segmentation model $h: \calX \to \calY$, we use the \emph{Dice coefficient}~\citep{dice_1,dice_2,zijdenbosMorphometricAnalysisWhite1994}, which has become the \emph{de facto} standard evaluation metric in semantic segmentation. For all $\by, \byh \in \{0,1\}^n$, the Dice coefficient is defined as
\begin{equation}\label{eq:dice-coeff}
D(\by, \byh) = \frac{2\sum_{i=1}^n y_i \yh_i }{ \sum_{i=1}^n (y_i + \yh_i)}
\end{equation}
while the performance of $h$ is evaluated as $\mathbb{E}_{x,\by \sim P}\left[ D(\by,h(x)) \right]$. Note that $D(\by, \byh) \in [0, 1]$, with perfect segmentation corresponding to a Dice score of 1.

While the definition \eqref{eq:dice-coeff} is ubiquitous in the literature, it is undefined for $D(\bzero, \bzero)$, where $\bzero$ denotes the all-zero vector, as it leads to a 0/0 division. In this paper, we explicitly address this case by defining
\begin{equation}\label{eq:dice-zero-zero}
D(\bzero, \bzero) = 0.
\end{equation}
One technical advantage of the above definition is that $D(\by, \bzero) = D(\bzero, \byh) = 0$ for all $\by, \byh \in \{0,1\}^n$. An in-depth discussion and justification for our definition~\eqref{eq:dice-zero-zero} can be found in Appendix~\ref{app:dice}.

To define the problem of selective prediction in this context, we consider the setup of Section~\ref{ssec:selective_prediction} with the loss function
\begin{equation}\label{eq:dice-error}
\ell(\by, \byh) = 1 - D(\by, \byh)
\end{equation}
which is sometimes known as the Dice distance/dissimilarity/error. In this case, a selective segmentation model $(h,g)$ either outputs a complete predicted segmentation mask $\byh \in \{0,1\}^n$, if $g(x)$ is above a given threshold, or otherwise abstains on the entire image. Note that, if we assume that only abstention cases will have to be manually processed by a specialist to provide a segmentation, then coverage in this context directly translates into saved effort.

We restrict attention to segmentation models 
that are based on some underlying probabilistic model $f:\calX \to [0,1]^n$, where $\bph = (\ph_1,\ldots,\ph_n) = f(x)$ may be interpreted as an estimate of $(p_{Y_1}(1|x), \ldots, p_{Y_n}(1|x))$. In other words, we assume our segmentation model to be of the form $h(x) = T(f(x))$, where $T: [0,1]^n \to \{0,1\}^n$ is some post-processing operation. Let $\byh = (\yh_1,\ldots,\yh_n) = h(x)$. As an example, a typical choice for $T$ is an element-wise thresholding function
\begin{equation}\label{eq:segmentation-thresholding}
\yh_i = \indicator[\ph_i \geq \gamma], \quad i=1,\ldots,n
\end{equation}
where $\gamma \in [0,1]$ is a decision threshold (typically $\gamma=0.5$).

Furthermore, we focus on the \textit{post-hoc} scenario of designing the confidence estimator $g(x)$ after the segmentation model $h(x)$ has been trained, which is assumed to be part of the problem specification and kept fixed. This is in contrast to approaches that require retraining or fine-tuning the segmentation model $h(x)$. Additionally, we focus on confidence estimators that can be computed directly from the probabilistic output $\bph = f(x)$, assuming that the post-processing operation $T$ has been given (which then allows us to compute $h(x)$), without requiring access to any other internal signals or parameters of the segmentation model. 
\new{In particular, this rules out ensemble-based methods (such as Monte Carlo Dropout \cite{Gal.Ghahramani.2016.Dropout-Bayesian-Approximation}) that produce and rely on multiple predictions per input image, unless these predictions are fused into a single probabilistic output $\bph$, which then must be the sole input to the confidence estimator. (In other words, our framework requires $g(x)$ to be only a function of $\bph$, but is entirely agnostic to the process producing $\bph$.)}
Nevertheless, we allow confidence estimators containing hyperparameters that need to be tuned, typically on additional hold-out data that has not been used to train the segmentation model. We refer to such estimators as ``tunable''; otherwise, as ``non-tunable".

\new{
From the \textit{post-hoc} perspective of learning the confidence estimator $g(x)$ given a fixed $h(x)$, the problem can be interpreted as single-label regression (since $g(x) \in \RR$) under an unconventional evaluation metric (the RC curve, or the AURC). A parallel can be drawn to optimizing a binary scoring classifier for the Area Under the Receiver Operating Characteristic (AUROC) curve, which is similarly nondecomposable. Naturally, one may attempt to use a simplified optimization objective, such as training $g(x)$ to predict the Dice coefficient $D(\by, h(x))$ (as done, e.g., in \cite{jungo_analyzing_2020}). In any case, for a realistic evaluation, the amount of tuning data must be limited, e.g., to a fraction of the test data.
}


\subsection{Comparison with previous problem formulations}\label{ssec:previous-work}

\new{To the best of our knowledge, the problem described above has not appeared explicitly in the literature before, except in our preliminary work~\citep{borges2024selective} and implicitly in the concurrent work of \citet{kahl_values_2024}.}
We now briefly discuss related formulations appearing in the literature.

Many papers have proposed and evaluated uncertainty measures for semantic segmentation; these works are relevant since any uncertainty measure can be turned into a confidence score by taking its negative.

Since semantic segmentation corresponds to pixel-wise classification, a common approach is to take a confidence/uncertainty estimator designed for classification and apply it in a pixel-wise fashion to form an uncertainty map with the same dimensionality as the ground truth segmentation \citep{devries_leveraging_2018, shen_assessing_2019, nair_exploring_2020, lambert_fast_2022}. However, without any adaptation, such uncertainty maps can only be directly applied to selective prediction at the \textit{pixel} level, i.e., in this approach, the predictions for a fraction of the pixels of an image would be rejected. It is unclear how such partial segmentations could be useful in practice, since any specialist in charge of completing the rejected pixel predictions would likely still have to analyze the entire image. Component-level or region-level uncertainty \citep{molchanova_novel_2022} may partly alleviate this problem, but does not solve it entirely.
For example, if an image contains multiple rejected components, a specialist may still have to analyze a significant amount of context around a rejected region.
In general, the concept of coverage becomes less meaningful under these formulations, since it is unlikely to directly translate into saved effort.

Although some works have evaluated uncertainty measures at the image level, none of them have considered the risk-coverage tradeoff under the Dice error metric \eqref{eq:dice-error}.
For instance, \citet{jungo_analyzing_2020} focus primarily on failure detection, i.e., distinguishing between successful and failed predictions, and thus evaluate performance using the Area Under the Receiver Operating Characteristic curve (AUROC) \citep{roc_analysis}. This approach differs from ours in two aspects.
First, it requires an arbitrary threshold to classify predictions into successful or failed, thus ignoring differences in segmentation quality among predictions in the same class.
Second, instead of the risk-coverage tradeoff, it evaluates uncertainty estimation quality by the tradeoff between the fraction of successful predictions that are rejected and the fraction of failed predictions that are accepted, a concept which is arguably less meaningful in practice.
\citet{kushibar_layer_2022}, on the other hand, evaluate uncertainty estimation performance using the risk-coverage tradeoff, but they still compute risk using a ``thresholded" version of the Dice metric (namely, $\ell(\by, \byh) = \indicator[D(\by, \byh) < 0.9]$), as in failure detection. Finally, \citet{holder_efficient_2021} consider image-level uncertainty, but as a tool for out-of-distribution detection, a problem which is only superficially related to segmentation quality using the Dice metric.

The only exception of which we are aware is the work of \citet{kahl_values_2024}, which (among other contributions) evaluated image-level uncertainty using the AURC metric based on the Dice coefficient. However, their focus is on benchmarking components of uncertainty estimation methods over several conditions, aiming to draw general conclusions, rather than addressing the design of the confidence estimator when a specific segmentation model is given as part of the problem.

\section{Image-level confidence estimators}\label{sec:image_level_confidence_estimators}

This section introduces all the confidence estimators considered in this work.

\subsection{Existing methods}\label{ssec:existing-methods}

\subsubsection{Non-tunable}

Non-tunable confidence estimators can be expressed as a function of $\bph = f(x)$, as described in Section~\ref{sec:problem_statement}.

The simplest approach is to take a confidence/uncertainty estimator for classification, apply it element-wise to $\bph$ and then aggregate the scores into a scalar value using some aggregation function such as the mean. This is one of the approaches taken by \citet{jungo_analyzing_2020}, who produce an uncertainty map $\bu=(u_1,\ldots,u_n)$ by computing, for each $j \in \{1,\ldots,n\}$, the entropy of $(1-\ph_j, \ph_j)$, defined as
\begin{equation}\label{eq:uncertainty_norm_entropy}
u_j = - (1 - \ph_j) \log_2 (1 - \ph_j) - \ph_j \log_2 \ph_j ,
\end{equation}
and then aggregate these uncertainty scores as
$ 
g(x) = -\frac{1}{n}\sum_{j=1}^n u_j.
$ 
We refer to this confidence estimator as the \emph{average Negative Entropy} (aNE).

An analogous procedure can be applied to the MSP for each $j \in \{1,\ldots,n\}$, defined as
\begin{equation}\label{eq:msp}
\text{MSP}_j= \max\{\hat{p}_j,\, 1 - \hat{p}_j\}
\end{equation}
resulting in the confidence estimator $g(x) = \frac{1}{n}\sum_{j=1}^n \text{MSP}_j$, which we refer to as the \emph{average MSP} (aMSP).

We also consider a more sophisticated aggregation strategy proposed by \citet{holder_efficient_2021} which, when applied to the uncertainty map $\bu$ defined in \eqref{eq:uncertainty_norm_entropy}, results in
\begin{equation}
g(x) = - \frac{\text{Median}(\bu) + \min_{j} u_j}{\max_{j} u_j}
\end{equation}
where $\text{Median}(\bu)$ is the median of $\{u_1,\ldots,u_n\}$. We refer to this confidence estimator as the \textit{Median-Min-Max Confidence} (MMMC).

\subsubsection{Tunable}

Tunable confidence estimators additionally contain hyperparameters that must be tuned on hold-out data. Let this tuning set be denoted as $\{(x^{(i)}, \by^{(i)})\}_{i=1}^N$.

A simple extension of the aNE estimator, as proposed by \citet{kahl_values_2024}, is to aggregate only uncertainty scores above a certain threshold $\tau \in \RR$:
\begin{equation}\label{eq:tla}
g(x) = - \frac{\sum_{j=1}^n u_j \indicator[u_j > \tau]}{\sum_{j=1}^n \indicator[u_j > \tau]}.
\end{equation}
To choose the threshold $\tau$, \citet{kahl_values_2024} propose to first estimate the mean foreground ratio as
$ 
\alpha = \frac{1}{N} \sum_{i=1}^{N} \frac{1}{n} \sum_{j=1}^n \yh^{(i)}_j
$ 
where $(\yh^{(i)}_1, \ldots, \yh^{(i)}_n) = h(x^{(i)})$. Then, $\tau$ is computed as the $(1-\alpha)$-quantile across all the uncertainty scores $\{u^{(i)}_j,\, j=1,\ldots,n,\, i=1,\ldots,N\}$, where $u^{(i)}_j$ denotes the entropy \eqref{eq:uncertainty_norm_entropy} corresponding to the $j$-th element of the $i$-th sample. Note that only unlabeled tuning data $\{x^{(i)}\}_{i=1}^N$ is required for tuning $\tau$.
We refer to \eqref{eq:tla}, with $\tau$ tuned as described above, as the \textit{Threshold-Level Aggregation} (TLA) confidence estimator.

\new{In addition to TLA, \citet{kahl_values_2024} also introduce the \textit{Patch-Level Aggregation} (PLA) approach, which aggregates uncertainty within local spatial neighborhoods. Specifically, this method computes the sum of uncertainties within each patch of extent $s$ along every spatial dimension of the image, and the highest aggregated value is taken as the image-level score. Although the authors fix $s=10$, corresponding to patches of size $10^D$, where $D$ denotes the number of spatial dimensions of image $x$, this parameter can be treated as a tunable hyperparameter that may be optimized using a dedicated tuning set. In this work, we denote by PLA the configuration with $s=10$, as originally proposed, and by PLA* the tunable version, in which $s$ is selected based on tuning performance.}

\new{The \emph{Patch-Level Aggregation} confidence function $g(x)$ is defined as:}
\begin{equation}\label{eq:pla_sum}
\new{g(x) = -\max_{\mathbf{w} \in \mathcal{W}} 
\sum_{j \in \mathbf{w}} u_j,}
\end{equation}
\new{where a patch $\bw \subseteq \{1,\ldots,n\}$ is understood as a subset of corresponding pixel indices, $\mathcal{W}$ denotes the set of all valid patches of size $s$, and $u_j$ represents the entropy value at pixel $j$, $j=1,\ldots,n$.}

We also consider the best-performing tunable confidence estimator proposed by \citet{jungo_analyzing_2020}, referred to as \textit{Aggregation with Automatically-Extracted Features} (AEF), which we describe as follows.

First, the uncertainty map $\bu$ is quantized as $\buh = (\uh_1,\ldots,\uh_n)$, with $\uh_j = \indicator[u_j > \tau]$ for $j=1,\ldots,n$, for some threshold $\tau \in \RR$. The interpretation is that $\buh$ should predict the segmentation errors $\be = (e_1,\ldots,e_n)$, where $e_j = (1-y_j) \yh_j + y_j (1-\yh_j)$, $j = 1,\ldots,n$, i.e., the union of false positives and false negatives. Thus, the threshold $\tau$ is chosen as the value in the grid $\{0.05, 0.10, \ldots, 0.95\}$ that maximizes the average of the Dice coefficient $D(\be, \buh)$ over the tuning set.

Then, a large number of visual features are automatically extracted from $\bu$ and $\buh$. \citet{jungo_analyzing_2020} employed the PyRadiomics package \citep{van_griethuysen_computational_2017} in its default settings, using $\bu$ as the input image and $\buh$ as the segmentation mask, to extract 102 features, including shape, first-order, and other gray-level characteristics.

Finally, the extracted features are used as input to a random forest regressor that predicts the Dice coefficient $D(\by, \byh)$ of the segmentations. The random forest regressor is configured with 10 base learners and trained on the tuning set to minimize the mean squared error.

\subsection{Proposed method: Soft Dice Confidence}\label{ssec:proposed-method}

We now introduce our proposed confidence estimator. For all $\bph \in [0,1]^n$ and $\byh \in \{0,1\}^n$, the \textit{Soft Dice Confidence} (SDC) is defined as
\begin{align}\label{eq:sdc}
\SDC(\bph, \byh) = \frac{2\sum_{j=1}^n \ph_j \yh_j }{ \sum_{j=1}^n (\ph_j + \yh_j)}, \; \text{if $\bph \neq \bzero \lor \byh \neq \bzero$, \quad and \quad $\SDC(\bzero, \bzero) = 0$.}
\end{align}
Note that the SDC is non-tunable (does not require additional data) and very easy to compute. Since $\byh$ is a function of $\bph$ according to \eqref{eq:segmentation-thresholding}, we may also denote the SDC simply as $\SDC(\bph)$.

Below we provide some intuition for definition \eqref{eq:sdc}. Theoretical justification  is provided in Section~\ref{sec:theoretical-results}.


Recall that, in the context of selective prediction, the goal of an image-level confidence estimator is to serve as a predictor of segmentation quality as evaluated by the Dice coefficient \eqref{eq:dice-coeff}. The latter can be rewritten as
\begin{equation}\label{eq:new_dice_coeff}
D(\by, \byh) 
= \frac{
2\sum_{j=1}^n y_j \yh_j
}{
\sum_{j=1}^n (y_j + \yh_j)
}
= \left(
1
+
\frac{
\textcolor{blue}{\overbrace{\textstyle\sum_{j=1}^n y_j(1-\yh_j)}^{\text{FN}}}
+
\textcolor{red}{\overbrace{\textstyle\sum_{j=1}^n (1-y_j) \yh_j }^{\text{FP}}}
}
{
2\textcolor{midgreen}{\underbrace{\textstyle\sum_{j=1}^n y_j\yh_j }_{\text{TP}}}
}
\right)^{-1}
\end{equation}
where \textcolor{midgreen}{\text{TP}}, \textcolor{blue}{\text{FN}}, and \textcolor{red}{\text{FP}} indicate, respectively, the sum of true positive, false negative, and false positive predictions at the pixel level.
From \eqref{eq:new_dice_coeff}, it is clear that the Dice coefficient does not consider the true negative predictions. In other words, it ignores the portion of the background that is correctly classified. 

Now, observe that an expression analogous to \eqref{eq:new_dice_coeff} can be written for the SDC, with $\bph$ replacing $\by$. Take the case of a false negative $y_j(1-\yh_j)$ and suppose that $\ph_j$ is indeed equal to $p_{Y_j}(1|x)$. When $\yh_j = 0$, we would like $\ph_j$ to be as small as possible, to minimize the probability that $y_j=1$. Thus, $\sum_j \ph_j(1-\yh_j)$ serves as a quantification of the expected sum of false negatives. A similar reasoning can be applied to true and false positives. Notably, the SDC does not consider true negatives.

The same cannot be said for the estimators in Section~\ref{ssec:existing-methods} (except possibly the AEF, which is harder to interpret). Consider for instance the MSP. It can be rewritten, assuming $\gamma=0.5$, as
\begin{equation}\label{eq:msp-rewritten}
\text{MSP}_j= (1-\ph_j) (1-\yh_j) + \ph_j \yh_j ,
\end{equation}
from which we can see it works as an estimator of a correct prediction (either a true negative or a true positive). Thus, we expect it to perform poorly, as it appears to be misaligned with the evaluation metric. A similar reasoning can be applied to the other entropy-based measures (note that $1 - u_j/2$ has a concave shape similar to $\text{MSP}_j$, coinciding at $\ph_j \in \{0, 0.5, 1\}$).

The issue is illustrated in Figure~\ref{fig:image_selective_prediction}, where we can see that 
the high-confidence background region causes the mean aggregation of the MSP to yield a high confidence score even in the case of a bad segmentation (in fact, higher than in the example of a good segmentation). In contrast, the SDC, by excluding pixel-level confidence from likely true negatives, produces a score better aligned to the evaluation metric.

\begin{figure}
  \centering
  \includegraphics[width=0.8\textwidth]{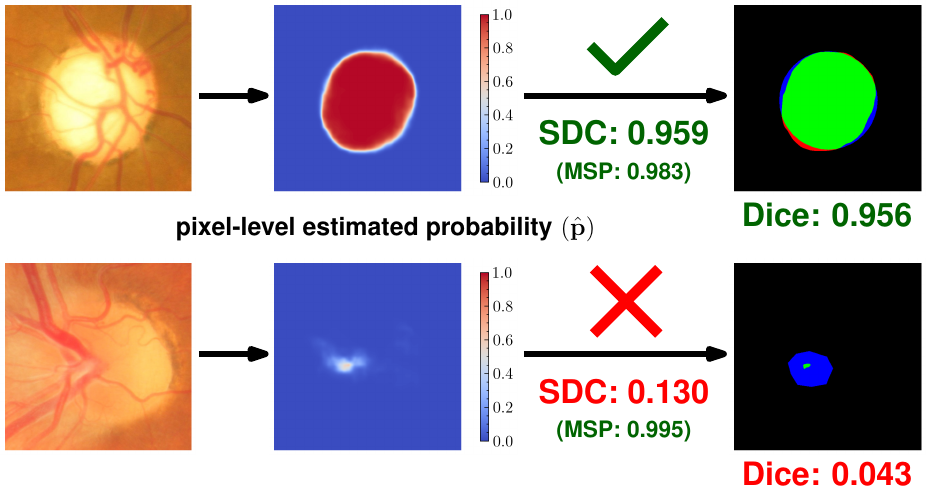}\\[-1ex]
  \caption{Examples of the behavior of two post-hoc confidence estimators for image-level selective prediction. For each example, the input image is shown, followed by the pixel-level estimated probabilities $\bph$, the computed image-level confidence scores, and the final segmentation output. In the predictions, light green, blue and red denote, respectively, \textcolor{midgreen}{true positives}, \textcolor{blue}{false negatives} and \textcolor{red}{false positives}. The images shown are from the ORIGA dataset (more details in Section~\ref{ssec:optic-cup}).}
  \label{fig:image_selective_prediction}
\end{figure}

\begin{remark}
The expression for the SDC resembles the one for the \emph{soft Dice loss} (SDL) \citep{milletari_v-net_2016,drozdzal_importance_2016,sudre_generalised_2017} defined as
\begin{equation}\label{eq:SDL}
\SDL(\bph, \by) = 1 - \frac{2\sum_{j=1}^n \ph_j y_j }{ \sum_{j=1}^n (\ph_j + y_j)}
\end{equation}
where $\by \in \{0,1\}^n$ denotes the ground-truth segmentation.
Specifically, if the hard predictions~$\byh$ are used in place of the ground truth $\by$, we can express
\begin{equation}
\SDC(\bph, \byh) = 1 - \SDL(\bph, \byh).
\end{equation}
Despite their mathematical similarity, the SDL and the SDC serve fundamentally different purposes and have distinct requirements and motivations. 
The SDL is a surrogate loss function designed for gradient-based training and therefore requires ground-truth data to guide model parameter updates. It is often applied at patch and/or batch level and potentially combined with other surrogate loss functions such as the binary cross-entropy loss.
In contrast, the SDC is a confidence estimator intended for ranking predictions in image-level selective prediction and therefore requires only the model output as an estimate of the marginal posterior probabilities; it is derived (see Section~\ref{sec:theoretical-results}) as an approximation to an optimal confidence score function. 
\end{remark}
\new{Since the SDC is designed as a post-hoc confidence estimator to be computed at inference time, its computational properties are particularly relevant in practice. From a computational perspective, the Soft Dice Confidence is highly efficient: its computation scales linearly with the number of pixels, \(O(n)\), and involves only simple vector operations on the model’s probabilistic output. In practice, this additional cost is negligible compared to the computational expense of a single forward pass of a deep segmentation model. In contrast, several existing confidence estimation methods require additional aggregation steps, patch-based processing, or multiple forward passes, resulting in substantially higher inference-time computational costs.}

\section{Theoretical results}
\label{sec:theoretical-results}

Let $\bp = (p_1,\ldots,p_n)$ denote the vector of marginal posteriors $p_j = P_{Y_j}(1|x)$, \mbox{$j=1,\ldots,n$}. In this section, we assume that $\bp$ is known and derive the SDC as an approximation to an optimal confidence score function under the Dice metric. 

\subsection{Conditional risk}


We start with the insight that semantic segmentation can be seen as a special case of classification.
\new{
Each of the $K=2^n$ possible binary segmentation masks can be seen as one out of a set of mutually exclusive classes, although the model's output is evaluated with a non-standard evaluation metric, i.e., not the 0-1 loss or any standard classification metric.

For instance, if $n=4$ (corresponding to a $2 \times 2$ image), then there are exactly $K = 16$ possible binary segmentation masks, which could be treated as a $K$-class classification problem.
Of course, for large $n$ (e.g., for $n=784$, corresponding to a $28\times 28$ image), it would be infeasible to train a conventional classifier
to solve this problem.%
}
Nevertheless, the results of \citet{franc_2023} still hold. As reviewed in Section~\ref{ssec:optimal-strategies}, an optimal confidence score function may be obtained as $g(x) = 1 - r(x)$, where
\begin{equation}\label{eq:conditional-risk-segmentation}
r(x) = \sum_{\by \in \calY} p_{Y}(\by|x) \ell(\by, h(x))
\end{equation}
for any loss function $\ell: \calY \times \calY \to \RR^+$. Therefore, the design of a confidence score function reduces to the problem of efficiently computing the conditional risk \eqref{eq:conditional-risk-segmentation} for the desired loss function.


Let 
$\byh = (\yh_1,\ldots,\yh_n) = h(x)$.
For certain loss functions, the conditional risk can be easily computed.
\begin{proposition}\label{prop:risk-affine-loss}
If $\ell(\by, \byh)$ can be expressed as an affine function of $\by = (y_1,\ldots,y_n)$, i.e., $\ell(\by, \byh) = a_0(\byh) + \sum_{i=1}^n a_i(\byh) y_i$, for some $a_0(\byh), \ldots, a_n(\byh) \in \RR$, then $r(x) = \ell(\bp, \byh)$.
\end{proposition}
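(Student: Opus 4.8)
The plan is to recognize the conditional risk $r(x)$ as an expectation of $\ell(\by, \byh)$ over $\by \sim p_Y(\cdot \mid x)$ with the prediction $\byh = h(x)$ held fixed, and then to exploit the affine structure through linearity of expectation. First I would substitute the hypothesized form $\ell(\by, \byh) = a_0(\byh) + \sum_{i=1}^n a_i(\byh)\, y_i$ directly into the definition \eqref{eq:conditional-risk-segmentation}, obtaining
\[
r(x) = \sum_{\by \in \calY} p_Y(\by \mid x)\Big( a_0(\byh) + \sum_{i=1}^n a_i(\byh)\, y_i \Big).
\]
Since $\byh = h(x)$ is a deterministic function of $x$, each coefficient $a_i(\byh)$ is constant with respect to the sum over $\by$ and may be pulled outside of it.

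The crucial step is then the marginalization. Using that $y_i \in \{0,1\}$ together with the definition of the marginal posterior from the problem statement, I would compute $\sum_{\by \in \calY} p_Y(\by \mid x)\, y_i = \sum_{\by:\, y_i = 1} p_Y(\by \mid x) = p_{Y_i}(1 \mid x) = p_i$. Combining this with the normalization $\sum_{\by \in \calY} p_Y(\by \mid x) = 1$, the expression collapses to
\[
r(x) = a_0(\byh) + \sum_{i=1}^n a_i(\byh)\, p_i.
\]
Finally, I would observe that the right-hand side is exactly the affine expression for $\ell$ with $\by$ replaced by $\bp = (p_1, \ldots, p_n)$, i.e., $\ell(\bp, \byh)$, which completes the argument.

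The main point to state carefully is not an obstacle so much as a matter of well-definedness: $\ell(\bp, \byh)$ involves evaluating the loss at the real-valued vector $\bp \in [0,1]^n$ rather than at a binary mask, so I would note that the coefficients $a_0(\byh), \ldots, a_n(\byh)$ give the unique affine extension of $\ell(\cdot, \byh)$ from $\{0,1\}^n$ to $[0,1]^n$, making $\ell(\bp, \byh)$ unambiguous. Beyond this, the proof is simply linearity of expectation, with the only substantive use of the hypotheses being (i) the affine dependence on $\by$, which lets the expectation pass through to each $y_i$, and (ii) the fact that $\byh$ is fixed given $x$, which licenses treating the coefficients as constants.
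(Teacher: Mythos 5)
Your proof is correct and follows essentially the same route as the paper's one-line argument, which likewise invokes $\sum_{\by \in \calY} p_Y(\by|x)\, y_i = p_i$ and linearity of expectation; you simply spell out the intermediate steps (pulling the coefficients $a_i(\byh)$ out of the sum, normalization) explicitly. Your remark on the well-definedness of $\ell(\bp,\byh)$ as the affine extension of $\ell(\cdot,\byh)$ to $[0,1]^n$ is a careful touch the paper leaves implicit, but it does not change the substance of the argument.
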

\begin{proof}
Since $\sum_{\by \in \calY} p_{Y}(\by|x) y_i = p_i$ for all $i$, the proof follows by linearity of expectation.
\end{proof}

\begin{example}
    The Hamming loss $\ell(\by,\byh) = \frac{1}{n}\sum_{i=1}^{n}\indicator\left[y_i\neq \yh_i\right]$ can be written as $\ell(\by,\byh) = 1 - \frac{1}{n}\sum_{i=1}^{n} \left(y_i \yh_i + (1-y_i)(1-\yh_i)\right)$. Thus, simply replacing $\by$ with $\bp$ in this expression gives us the conditional risk. It follows that an optimal confidence score function is given by 
    \begin{equation}
    g(x) = 1- r(x) = \frac{1}{n}\sum_{i=1}^{n} \left(p_i \yh_i + (1-p_i)(1-\yh_i)\right)
    \end{equation}
    which we can interpret as the average probability of the predicted class.%
    Interestingly, when $\gamma = 0.5$, applying the plug-in rule to this expression results in the aMSP.
\end{example}

However, for general loss functions, computing the conditional risk is not only intractable for large $n$, due to the large cardinality of $\calY$, but actually impossible unless we have access to the full posterior $p_Y(\by|x)$.

\subsection{Ideal Dice confidence}\label{sec:ideal-dice-confidence}

From now on, we focus on the loss function $\ell(\by,\byh)=1-D(\by,\byh)$ induced by the Dice coefficient, as defined in Section \ref{sec:problem_statement}. The conditional risk is given by $r(x)=1-\IDC(x)$ where
\begin{equation}\label{eq:ideal-dice-confidence}
\IDC(x) = \sum_{\by\in \calY} p_{Y}(\by|x) D(\by, h(x))
.\end{equation}
We refer to $\IDC(x)$ as the \textit{ideal Dice confidence} (IDC), as it is an optimal confidence score function.

To enable the computation of the IDC,
we assume that the elements of the target vector are conditionally independent given $x$, i.e., for all $\by \in \{0,1\}^n$,
\begin{equation}\label{eq:conditional-independence}
p_{Y}(\by | x) = \prod_{i=1}^{n} p_{Y_i}(y_i|x) = \prod_{i=1}^{n} p_i^{y_i}(1-p_i)^{1-y_i}.
\end{equation}
Note that, in semantic segmentation, conditional independence is implicitly assumed when training a model with surrogate loss functions such as the cross-entropy loss, and it is also widely accepted due to the intractability of the modeling problem in high dimensions.
For more discussion about these points, see~\citep{dai_rankseg_2023}.

Under this assumption, the IDC can be computed as $\IDC(x) = \IDC(\bp, \byh)$, where\footnote{If conditional independence cannot be assumed, we may make the distinction between the two by referring to $\IDC(x)$ as the \textit{full-posterior} IDC and to (its approximation) $\IDC(\bp, \byh)$ as the \textit{marginal-based} IDC.}
\begin{equation}\label{eq:conditionally-independent-idc}
\IDC(\bp, \byh)
= \sum_{\by\in \mathcal{Y}} \left(  \prod_{i=1}^{n} p_i^{y_i}(1-p_i)^{1-y_i} \right) D(\by, \byh)
,\end{equation}
which now only depends on $\bp$ and $\byh$, thus reducing the space complexity from $O(2^n)$ to $O(n)$. 
However, computing this expression still has complexity $O(2^n)$ and is therefore intractable for large~$n$.

\subsection{Soft Dice confidence}
Although the Dice coefficient is certainly \textit{not} an affine function of $\by$, we can still use the intuition of Proposition~\ref{prop:risk-affine-loss} to propose a candidate approximation to the IDC by simply taking the expression for $D(\by, \byh)$ and replacing $\by$ with $\bp$. This leads precisely to our definition of the SDC in \eqref{eq:sdc}.

Our main theoretical result is a proof that the ratio between $\IDC(\bp, \byh)$ and $\SDC(\bp, \byh)$ is tightly bounded and close to 1.

\begin{theorem}\label{theo:main}
For all $n \geq 1$, all $\bp \in [0,1]^n$, and all $\byh \in \{0,1\}^n$,
\begin{equation}\label{eq:theo-null-statement}
\SDC(\bp,\byh) = 0 \iff  s = 0 \iff \IDC(\bp,\byh)=0
\end{equation}
where $s = \sum_{i=1}^{n} p_i \yh_i$.
Otherwise, if $s > 0$, then
\begin{equation}\label{eq:sdc-bounds-inequalities}
b_L(k,\mu,\lambda) \triangleq \frac{k + k\mu + \lambda}{k + 1 + (k-1)\mu + \lambda} \le \frac{\IDC(\bp,\byh)}{\SDC(\bp,\byh)} \le \sum_{i=0}^{\infty} \frac{\lambda^{i} e^{-\lambda}}{i!} \frac{k + k\mu + \lambda}{k + k\mu + i} \triangleq b_U(k,\mu,\lambda)
\end{equation}
where 
$k =\sum_{i=1}^{n} \yh_i$,
$\mu  = \frac{1}{k} \sum_{i=1}^{n} p_i \yh_i$,
and $\lambda  = \sum_{i=1}^{n} p_i(1-\yh_i)$.
Moreover, 
\begin{equation} \label{eq:bounds-around-1}
b_L(k,\mu,\lambda) \le 1 \le b_U(k,\mu,\lambda).
\end{equation}
\end{theorem}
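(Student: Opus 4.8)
The plan is to reformulate both quantities probabilistically. Writing $A = \sum_{i:\,\yh_i=1} y_i$ for the number of true positives and $C = \sum_{i:\,\yh_i=0} y_i$ for the number of false negatives, the conditional independence assumption \eqref{eq:conditional-independence} makes $A$ and $C$ independent Poisson-binomial variables with $\mathbb{E}[A] = \sum_i p_i\yh_i = k\mu = s$ and $\mathbb{E}[C] = \lambda$. Since $\sum_i y_i\yh_i = A$ and $\sum_i(y_i+\yh_i) = A + C + k$, the expression \eqref{eq:conditionally-independent-idc} becomes $\IDC(\bp,\byh) = \mathbb{E}[\tfrac{2A}{A+C+k}]$ (with the $D(\bzero,\bzero)=0$ convention handling the $k=0$ case), while $\SDC(\bp,\byh) = \tfrac{2k\mu}{k+k\mu+\lambda}$. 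The null statement \eqref{eq:theo-null-statement} then falls out immediately: $\SDC=0\iff s=0$ from its closed form, and since the summand $\tfrac{2A}{A+C+k}$ is nonnegative and strictly positive whenever $A\ge1$, we get $\IDC=0$ iff $A=0$ almost surely iff $\sum_{i:\,\yh_i=1}p_i=s=0$. For the remainder I assume $s>0$, which forces $k\ge1$ and makes every denominator below strictly positive.

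For the upper bound I would exploit that $a\mapsto \tfrac{a}{a+C+k}$ is concave, so Jensen in $A$ (applied inside the expectation over $C$, using independence) gives $\IDC \le \mathbb{E}_C[\tfrac{2k\mu}{k+k\mu+C}]$, replacing $A$ by its mean. It then remains to pass from the Poisson-binomial $C$ to a Poisson$(\lambda)$ variable $I$. I would do this through the integral representation $\tfrac{1}{a+C} = \int_0^1 t^{a+C-1}\,dt$: taking expectations, $\mathbb{E}_C[\tfrac{1}{a+C}] = \int_0^1 t^{a-1}\,\mathbb{E}[t^C]\,dt$, and the elementary bound $1-x\le e^{-x}$ yields $\mathbb{E}[t^C] = \prod_{i:\,\yh_i=0}(1-p_i(1-t)) \le e^{-\lambda(1-t)} = \mathbb{E}[t^I]$ for $t\in[0,1]$. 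Since $t^{a-1}\ge0$, this upgrades to $\mathbb{E}_C[\tfrac{1}{a+C}]\le \mathbb{E}_I[\tfrac{1}{a+I}]$ with $a = k+k\mu$, giving exactly $\IDC \le \mathbb{E}_I[\tfrac{2k\mu}{k+k\mu+I}] = b_U(k,\mu,\lambda)\,\SDC$.

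The lower bound is where I expect the real difficulty, since the naive route---Jensen in $C$ followed by bounding the concave $\tfrac{A}{A+c}$ by its chord over the range $[0,k]$---is too lossy and only yields the weaker constant $\tfrac{2k\mu}{2k+\lambda}$. The fix is to retain the Bernoulli structure of $A$. First, convexity of $c\mapsto\tfrac{A}{A+c}$ and Jensen in $C$ give $\IDC \ge 2\,\mathbb{E}_A[\tfrac{A}{A+c}]$ with $c = k+\lambda$. Then the size-biasing identity for sums of independent Bernoullis, $\mathbb{E}_A[\tfrac{A}{A+c}] = \sum_{i:\,\yh_i=1} p_i\,\mathbb{E}[\tfrac{1}{A_{-i}+c+1}]$ where $A_{-i}=A-y_i$, lets me apply Jensen to the convex $x\mapsto\tfrac{1}{x+c+1}$ and obtain $\sum_i \tfrac{p_i}{(k\mu-p_i)+c+1}$. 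Writing $M=k\mu+c+1$, which is a constant because $\sum_i p_i\yh_i$ is fixed and exceeds every $p_i$, this equals $\sum_i \psi(p_i)$ with $\psi(p)=\tfrac{p}{M-p}$ convex, so a final Jensen over the coordinates gives $\sum_i\psi(p_i)\ge k\,\psi(\mu)=\tfrac{k\mu}{(k-1)\mu+c+1}$, which is precisely $b_L(k,\mu,\lambda)\,\SDC$. I would emphasize that this chain is tight at the binomial (equal-$p_i$) configuration, which explains why $b_L$ depends on $\bp$ only through $k$, $\mu$, and $\lambda$.

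Finally, the two-sided sandwich \eqref{eq:bounds-around-1} is quick: $b_L\le1$ reduces to $k\mu\le1+(k-1)\mu$, i.e.\ $\mu\le1$; and $b_U\ge1$ follows by Jensen in the opposite direction, applying convexity of $i\mapsto\tfrac{k+k\mu+\lambda}{k+k\mu+i}$ to the Poisson$(\lambda)$ average at $\mathbb{E}[I]=\lambda$.
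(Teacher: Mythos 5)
Your proof is correct, and while it shares the paper's overall skeleton---rewriting $\IDC(\bp,\byh)=\mathbb{E}\left[2w_1/(k+w_1+w_0)\right]$ with independent Poisson-binomial true-positive and false-negative counts, then applying Jensen in the false-negative count (convexity) for the lower bound and in the true-positive count (concavity) for the upper bound---its core technical steps are genuinely different. The paper's engine is an extremal lemma, proved by induction on $n$, stating that $\mathbb{E}[1/(c+w)]$ for a Poisson-binomial $w$ with fixed mean is maximized by equal success probabilities; this lemma passes the lower bound to the $\text{Binomial}(k,\mu)$ case (finished off by a binomial absorption identity and one more Jensen step), and, after padding with zero-probability coordinates and letting the number of trials tend to infinity, yields the Poisson comparison needed for the upper bound. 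You bypass that lemma twice. For the upper bound, the integral representation $1/m=\int_0^1 t^{m-1}\,dt$ combined with the pgf domination $\prod_i\bigl(1-p_i(1-t)\bigr)\le e^{-\lambda(1-t)}$ gives the Poisson-binomial-to-Poisson comparison in a few lines, with no induction and no limiting argument (the interchange of integral and expectation is justified by nonnegativity, and $a=k+k\mu\ge 1>0$ makes the representation valid). For the lower bound, the size-biasing identity $\mathbb{E}[A/(A+c)]=\sum_{i}p_i\,\mathbb{E}[1/(A_{-i}+c+1)]$ applies directly to unequal $p_i$, and your two further Jensen applications (per coordinate, then across coordinates via convexity of $p\mapsto p/(M-p)$ with the constant $M=k\mu+k+\lambda+1$) land exactly on $b_L$; I verified the directions of all three Jensen steps and the positivity of every denominator. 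Your route is shorter and more elementary; what the paper's heavier machinery buys is the precise equality characterizations recorded in its lemmas (the lower bound is attained iff $p_1=\cdots=p_k=\mu$ and $p_{k+1},\ldots,p_n\in\{0,1\}$, and the upper bound only in the limit $n-k\to\infty$ with uniform background probabilities), i.e., a proof that $b_L$ and $b_U$ are sharp as functions of $(k,\mu,\lambda)$---you assert tightness at the binomial configuration but do not prove it, though the theorem as stated does not require it. Your handling of the null statement \eqref{eq:theo-null-statement} and of the sandwich \eqref{eq:bounds-around-1} coincides with the paper's.
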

\begin{proof}
See Appendix~\ref{app:theorem}.
\end{proof}

Note that the quantities $s$, $k$, $\mu$ and $\lambda$ defined in Theorem~\ref{theo:main} can be interpreted, respectively, as the total probability of the predicted foreground, the area/volume of the predicted foreground, the average probability of the predicted foreground, and the total probability of the predicted background.

\begin{corollary}\label{cor:error-bound-eps}
When the IDC is nonzero, the relative error between the SDC and the IDC is upper bounded by
\begin{equation}
\label{eq:relative-error-bound}
\frac{|\SDC(\bp,\byh) - \IDC(\bp,\byh)|}{\IDC(\bp,\byh)} \leq \epsilon(k, \mu, \lambda) \triangleq \max\left\{ \frac{1}{b_L(k,\mu,\lambda)} -1,\, 1 - \frac{1}{b_U(k,\mu,\lambda)} \right\}.
\end{equation}
Moreover,
\begin{equation}
\frac{|\SDC(\bp,\byh) - \IDC(\bp,\byh)|}{\IDC(\bp,\byh)} \leq
\epsilon(s) 
\triangleq \max_{\substack{k,\mu,\lambda:\\ k\mu = s}}\, \epsilon(k, \mu, \lambda) 
= \max\left\{
\epsilon_1(s),\, \epsilon_2(s)
\right\}
\end{equation}
where
\begin{align}
    \epsilon_1(s) &= \max_{\substack{k,\mu,\lambda:\\ k\mu = s}}\, \frac{1}{b_L(k,\mu,\lambda)} -1 = \max_{\mu \in [0,1]}\, \frac{1}{s}		\frac{\mu-\mu^2}{1+\mu} = \frac{3-2\sqrt{2}}{s} \\
\epsilon_2(s) &= \max_{\substack{k,\mu,\lambda:\\ k\mu = s}}\, 1 - \frac{1}{b_U(k,\mu,\lambda)}.
\end{align}
\end{corollary}

Numerical evaluation of $\epsilon(s)$ up to $s \leq 10^4$ shows that, at least for this range, $\epsilon(s) = \epsilon_1(s) > \epsilon_2(s)$. 
Thus, we can see that the relative error quickly decreases with $s$. In particular, the error is below 1\% for $s > 17.16$, which, as an example, could correspond to just 18 high-probability ($\mu \approx 0.95$) or 34 low-probability ($\mu \approx 
0.5$) predicted foreground pixels, regardless of the image dimensions. 
Note also that $\mu$ cannot be too small if $\byh$ is obtained by thresholding $\bp$ as in \eqref{eq:segmentation-thresholding}, since necessarily $\mu > \gamma$. Thus, we expect the approximation to be very good except possibly for images with only a few pixels of predicted foreground.
Evaluations of the tighter bound $\epsilon(k,\mu,\lambda)$ on real-world datasets are shown in Section~\ref{sec:experiments-medical}.

\new{
\subsection{Practical implications}

In practice, the true vector of marginal posteriors $\bp$ is unknown and we only have access to an estimate $\bph = f(x)$. In this case, we can simply replace $\bp$ with $\bph$ in a plug-in fashion to arrive at expression \eqref{eq:sdc}.

In this case, to understand the potential causes of suboptimality of the SDC, it is useful to distinguish between the following concepts:
\begin{itemize}
\item the full-posterior $\IDC(x)$,
\item the marginal-based $\IDC(\bp) = \IDC(\bp, \byh)$,
\item the $\SDC(\bp) = \SDC(\bp, \byh)$,
\item the marginal-based $\IDC(\bph) = \IDC(\bph, \byh)$ under the estimated probabilistic model,
\item and the $\SDC(\bph) = \SDC(\bph, \byh)$ under the estimated probabilistic model,
\end{itemize}
where, in all cases, the same prediction $\byh = h(x) = T(f(x))$ is used, computed by the segmentation model $h(x)$ derived from the estimated probabilistic model $f(x)$.

In general, $\IDC(\bp)$ may be interpreted as an \textit{approximation} to the true optimal confidence estimator $\IDC(x)$ when only the marginal posteriors are known. As illustrated in Fig.~\ref{fig:IDC-SDC}, the two quantities may differ whenever the true distribution deviates from the conditional independence assumption \eqref{eq:conditional-independence}. Similarly, $\IDC(\bph)$ may move away from $\IDC(\bp)$ as the estimated $\bph$ deviates from the true one $\bp$. In practice, however, we may have no way of knowing how close our estimate $\bph$ is to the true distribution, short of simply producing a better approximation. In particular, it may be virtually impossible to check whether conditional independence is violated, as simply storing $p(y|x)$ for a given $x$ requires $2^n$ numbers (not to mention the data requirements to achieve sufficient accuracy in such a fine-grained model). 

\begin{figure}
\centering
\begin{tikzpicture}[
    node distance=6em,
    every node/.style={font=\large},
    arrow/.style={->, thick, shorten <=3pt, shorten >=3pt}
]
\node (idcx) {IDC($x$)};
\node (idcp) [right=of idcx] {IDC($\bp$)};
\node (idcphat) [right=of idcp] {IDC($\bph$)};
\node (sdcp) [below=3ex of idcp] {SDC($\bp$)};
\node (sdcphat) [below=3ex of idcphat] {SDC($\bph$)};
\draw[arrow] (idcx) --
    node[midway, above=2pt, font=\footnotesize, align=center]
    {conditional\\independence}
    (idcp);
\draw[arrow] (idcp) --
    node[midway, above=2pt, font=\footnotesize, align=center]
    {estimated\\model}
    (idcphat);
\node at ($(idcp)!0.5!(sdcp)$) {$\approx$};
\node at ($(idcphat)!0.5!(sdcphat)$) {$\approx$};
\end{tikzpicture}
\vspace{2ex}
\caption{Relationship between versions of IDC and SDC.}
\label{fig:IDC-SDC}
\end{figure}

In contrast, the close relationship between $\IDC(\bp)$ and $\SDC(\bp)$, as well as that between $\IDC(\bph)$ and $\SDC(\bph)$, is established by Theorem~\ref{theo:main}; in particular, Corollary~\ref{cor:error-bound-eps} (when applied to $\bph$) provides a practical way to check whether the relative error between $\SDC(\bph)$ and $\IDC(\bph)$ is guaranteed to be small. It follows that, whenever this bound is confirmed to be sufficiently small, one should intuitively expect the $\SDC(\bph)$ to outperform any other confidence estimator based solely on $\bph$. Further improvements, then, would only be obtained by better (explicit or implicit) probabilistic modeling. For instance, tuning hyperparameters of $g(x)$ on hold-out data (especially when $f(x)$ suffers from overfitting and, presumably, overconfidence) may be a way of implicitly improving the probabilistic model.
}

\section{Experiments with simulated data}\label{sec:synth}

To evaluate the quality of the SDC as an approximation to the IDC, we conduct a series of synthetic experiments where the full posterior $p_Y(\by|x)$ is known.
We use a small $n$ to ensure that the computation of the IDC is feasible. 
Specifically, we use $n=10$.

Ideally, we would like to follow the assumption of conditional independence in \eqref{eq:conditional-independence}. However, this would necessarily imply a nonzero probability for a target image with an empty foreground, i.e., $p_Y(\bzero|x) > 0$, which would result in a less realistic segmentation problem, as discussed in Appendix~\ref{app:dice}, especially for small $n$. Instead, we choose a distribution that approximately satisfies this assumption but excludes the possibility of $\by = \bzero$. Specifically, we choose
\begin{equation}\label{eq:exact-full-posterior}
p_{Y}(\by | x) =
\begin{cases}
0, &\text{if $\by=\bzero$}, \\
\displaystyle\frac{\prod_{j=1}^{n} q_j^{y_j}(1-q_j)^{1-y_j}}{1 - \prod_{j=1}^{n}(1-q_j)}, &\text{otherwise}
\end{cases}
\end{equation}
where $q_i \in [0,1]$, $i=1,\ldots,n$, are dependent on $x$.
It follows that the marginal posterior probabilities are given by
\begin{equation}\label{eq:exact-marginals}
p_i = p_{Y_i}(1|x)  = q_i \frac{1}{1 - \prod_{j=1}^{n}(1-q_j)},\quad i=1,\ldots,n.
\end{equation}
Note that violating conditional independence gives us an opportunity to evaluate the difference between the full-posterior IDC \eqref{eq:ideal-dice-confidence} and its marginal-based version \eqref{eq:conditionally-independent-idc}, although, since \eqref{eq:conditional-independence} is approximately satisfied, we expect the difference to be small.

In the following experiments, the marginal posterior probabilities for each image are generated by independently sampling from a logit-normal distribution for each pixel. Specifically, for each $x$ and each $i=1,\ldots,n$, we generate an independent sample $z_i \sim \calN(\mu_z, \rho_z^2)$ and set $q_i = \sigma(z_i)$, where $\sigma$ is the logistic sigmoid function, $\rho_z=5$, and $\mu_z$ is chosen to produce a desired value for the expected foreground ratio $\alpha = E_{(x,\by)\sim P}[\frac{1}{n}\sum_{i=1}^n y_i]$.
Unless otherwise mentioned, we choose $\alpha \approx 0.25$ (obtained with $\mu_z=-3.698$ for our choice of $n$ and $\rho_z$), which is in line with the highest value observed in our experiments with medical imaging datasets (see Table~\ref{tab:dataset_info}, Section~\ref{sec:datasets}). Note that, under the conditional distribution just described, the dimension of $x$ is immaterial, so we set $\calX = \{1,\ldots,N\}$ and sample $x \in \calX$ uniformly at random.

\subsection{Ideal probabilistic model}\label{sec:synth-ideal-model}

Recall from Section~\ref{sec:problem_statement} that we consider a segmentation model $h: \calX \to \calY$ with output computed according to \eqref{eq:segmentation-thresholding}, based on a probabilistic model $f: \calX \to [0,1]^n$.

First, we assume that this probabilistic model is ideal, i.e., 
\begin{equation}
\bph = f(x) = (p_{Y_1}(1|x),\ldots,p_{Y_n}(1|x)) = \bp.
\end{equation}

Because of the small size of our target space, we are able to compute the true RC curve, as defined in Section~\ref{ssec:selective_prediction}.
We consider a dataset with $N = 5000$ input images, and compute the true conditional risk $r(x)$ for every image.
In turn, this allows us to easily compute the selective risk for every possible coverage level given different confidence score functions.
We consider four confidence score functions: the SDC; the full-posterior $\IDC(x)$ and its marginal-based version $\IDC(\bph) = \IDC(\bph, \byh)$; and the aMSP, which is a natural choice for confidence score function (see Section \ref{sec:image_level_confidence_estimators}).

We repeat our experiments 10 times, with different randomly generated distributions, and measure the performance through the AURC (see Section~\ref{ssec:selective_prediction}). The results are illustrated in Figure~\ref{fig:true-rc-ideal-model}.
It is noteworthy how the true RC curve of the SDC is very close to that of the marginal-based IDC \emph{and} that of the full-posterior IDC.
In fact, the AURC of the SDC is less than 1\% larger than that of the full-posterior IDC, whereas, for comparison, the AURC of the average MSP is 17\% larger.

The performance of the SDC is, in part, justified by how well the SDC approximates the full-posterior IDC, as illustrated in Figure~\ref{fig:sdc-idc-relative-error}.
\new{%
We note that this approximation is a direct result of our Theorem~\ref{theo:main}.
More specifically, Theorem~\ref{theo:main} guarantess that the SDC approximates well the marginal-based IDC.
In turn, the quasi-conditional independence of the posterior probability distribution implies that the marginal-based IDC will be close to the full-posterior IDC.
}

\begin{figure}
    \centering
    \begin{subfigure}{0.49\textwidth}
        \includegraphics[width=\textwidth]{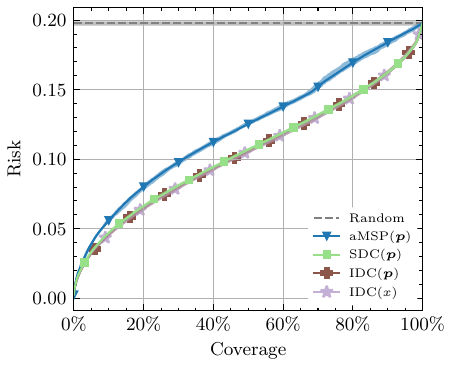}
        \caption{Ideal Segmentation Model.}\label{fig:true-rc-ideal-model}
    \end{subfigure}
    \hfill
    \begin{subfigure}{0.49\textwidth}
        \includegraphics[width=\textwidth]{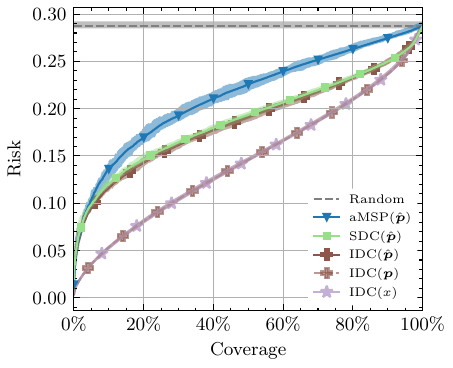}
        \caption{Non-ideal Segmentation Model.}\label{fig:true-rc-noisy-model}
    \end{subfigure}
    \caption{%
        RC curves of a semantic segmentation model with different confidence score functions on synthetic data with expected foreground ratio of 25\%.
        The experiments are repeated 10 times, each with 5000 samples.
        The shaded regions denotes the maximum and minimum values over all runs of the experiments, while the curves are average values.
        In both plots, $\IDC(\bp)$ indicates the marginal-based IDC, following \eqref{eq:conditionally-independent-idc}, whereas $\IDC(x)$ indicates the full-posterior IDC, following \eqref{eq:ideal-dice-confidence}.
        In \ref{fig:true-rc-ideal-model}, the model has access to the true marginal posterior probabilities $\bp$. In \ref{fig:true-rc-noisy-model}, the model has a perturbed version $\bph$ of the same probabilities.
        \emph{Random} indicates the expected risk of a confidence score function that abstains at random.
        The argument of each confidence score function indicates whether it was computed with the true vector of marginal probabilities ($\bp$) or with a perturbed vector ($\bph$).
    }
    \label{fig:rc-synthetic}
\end{figure}

\begin{figure}
    \centering
    \includegraphics{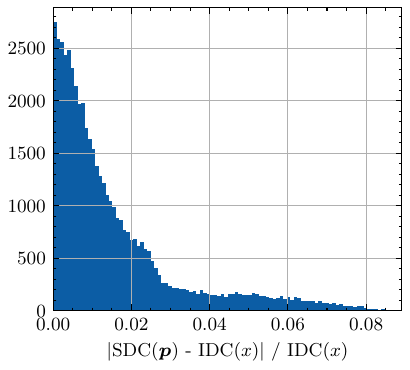}
    \caption{Relative error of the SDC to the full-posterior IDC on all images of our synthetic experiments with the ideal probabilistic model.}
    \label{fig:sdc-idc-relative-error}
\end{figure}

Up to now, we have set $\mu_z$ such that the target image has an expected foreground ratio $\alpha \approx 0.25$, as previously discussed.
Although low foreground ratio values are to be expected in practice, it is unclear how that trait affects the performance of confidence score functions.
We perform further experiments with the ideal model by varying the target's expected foreground ratio through $\mu_z$.
For each value of $\mu_z$, we generate a new population, and compute the true RC curve for the same confidence score functions as before.
We use the AURC as a summarizing metric, taking into account the effects at all coverage values. 
The results are illustrated in Figure~\ref{fig:varying-prevalence}.
Curiously, all confidence score functions converge to a low AURC for extremely low prevalence values.
Nevertheless, the SDC consistently matches the ideal Dice score functions, significantly outperforming the aMSP.

\begin{figure}
    \centering
    \includegraphics{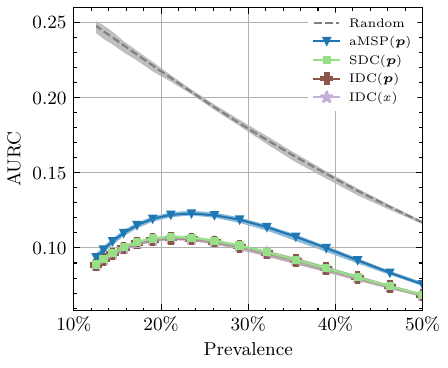}
    \caption{%
    Image-level selective segmentation performance of ideal probabilistic models for different levels of expected foreground ratio of the target image.
    The shaded region contains maximum and minimum values over 10 repetitions of the experiment, while the curves are the average values.
    Each point in the chart is generated by varying $\mu_z$ (expected mean of the marginal probabilities of the target), generating a new dataset, and computing the AURC associated to each confidence score function (see Fig.~\ref{fig:rc-synthetic}).
    }
    \label{fig:varying-prevalence}
\end{figure}

\subsection{Non-ideal probabilistic model}\label{sec:non-ideal-model}


We now consider a more realistic scenario where the model has a good, but non-ideal, estimate of the true marginal posterior probabilities.
We simulate such behavior by adding a logit-normal perturbation to the sampled probability vector $\bp=(p_1,\ldots,p_n)$, that is, a perturbation at the logit level of the probability values.
More precisely, for each output dimension $i$, the perturbed marginal probability vector $\bph$ is defined as
\begin{equation}
    \hat{p}_i = \sigma(\sigma^{-1}(p_i) + \eta_i),\, i=1,\ldots,n,
\end{equation}
where $\eta_i\sim \mathcal{N}(0,\rho_\eta^2)$ is a normally-distributed perturbation.
Note that the perturbation affects not only the confidence estimation but also the segmentation $h(x)$ through \eqref{eq:segmentation-thresholding}, resulting in higher risk.

As previously, we compute true RC curves, which are illustrated in Figure~\ref{fig:true-rc-noisy-model}, with a perturbation with $\rho_\eta=2$.
One addition is the computation of the marginal-based IDC with both the true vector of marginal probabilities $\bp$ and the perturbed one $\bph$.
Note that both $\IDC(x)$ and $\IDC(\bp) = \IDC(\bp, \byh)$ are computed based on the same segmentation model $h(x)$ obtained with non-ideal probabilistic model $\bph$; in other words, the segmentation model is the same for all confidence estimators.
While the results show that the SDC is still clearly better than the aMSP, it is now significantly worse than the IDC computed with the true marginal probabilities.
However, the SDC shows equivalent performance to the IDC when computed with the perturbed vector of marginal probabilities, which, under the assumption that the true marginal probabilities are unknown, is a more realistic comparison.
We note that this is also a direct result of Theorem~\ref{theo:main}, 
as the error bounds hold for any $\bph$.

To evaluate the impact of the probabilistic model's performance on the confidence score functions, we experiment with varying levels of perturbation by controlling $\rho_\eta$.
This way, we can directly estimate how susceptible the confidence score functions are to the quality for the marginal posterior probability estimate.
Starting from $\rho_\eta=0.0$, which is the same as the ideal probabilistic model, we progressively increase the standard deviation of the logit-normal perturbation.
For each value, we generate a new dataset and measure the risk of the resulting model and the AURC derived from each confidence score function.
The results are presented in Figure~\ref{fig:varying-perturbation}.
We note that all confidence score functions present worse performance with decreased model performance (increased risk).
However, the SDC maintains the behavior observed in Figure~\ref{fig:true-rc-noisy-model}, that is, it performs as well as the IDC when computed with the perturbed marginal probabilities.

\begin{figure}
    \centering
    \includegraphics{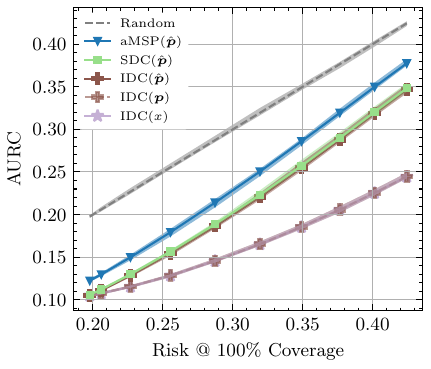}
    \caption{%
    Image-level selective segmentation performance of non-ideal probabilistic models with increasing perturbation levels.
    The shaded region for each curve represents the minimum and maximum values observed over 10 repetitions of the experiments, while the curves are the average values.
    Each point in the chart is generated by adding a logit-normal perturbation (with increasing intensity) to the true marginal probability distribution and using the result to build a segmentation model.
    We note that the starting (left-most) point is generated from not adding any perturbation and, thus, $\bp=\bph$.
    }
    \label{fig:varying-perturbation}
\end{figure}

\section{Experiments with Medical Imaging Data}
\label{sec:experiments-medical}

In this section, we evaluate our proposed confidence score function and baselines on various medical image segmentation tasks, using both in-distribution (ID) and out-of-distribution (OOD) datasets. The latter allow us to assess how well the confidence estimators generalize to data that deviates from the training distribution, an important aspect for real-world deployment in clinical settings.

\subsection{Tasks}\label{sec:datasets}

We consider six medical image segmentation tasks across a broad range of clinical challenges, imaging modalities, and lesion characteristics. 
For each task, we employ a high-performing deep learning model obtained from the literature (in the form of trained weights or original training code). The tasks, models, and datasets used are summarized in Table~\ref{tab:tasks}.


\begin{table}
\centering
\renewcommand{\arraystretch}{1.15}
\caption{Segmentation tasks, models and datasets considered in our evaluation.}
\begin{tabular}{@{} >{\raggedright}p{4cm} p{2cm} >{\raggedright}p{2.5cm} p{3cm} @{}}
\toprule
\textbf{Segmentation Task} & \textbf{Model} & \textbf{ID Dataset} & \textbf{OOD Dataset} \\
\midrule
Brain Tumor & 3D nnUNet & BraTS 2020 & -- \\
Breast Cancer & UNeXt & BUSI & -- \\
Skin Cancer & UNeXt & ISIC 2018 & -- \\
Multiple Sclerosis White Matter Lesion (MS-WML) & 3D UNet & ISBI + MSSEG-1 & PubMRI \\
Optic Cup & Segtran & REFUGE & ORIGA + G1020 \\
Polyp & Polyp-PVT & Kvasir-SEG + ClinicDB & ETIS + ColonDB + EndoScene \\
\bottomrule
\end{tabular}
\label{tab:tasks}
\end{table}

\new{Our ID datasets are exactly those used by the original authors to train and develop the models.
For training and evaluation of segmentation models, as well as the evaluation of non-tunable confidence estimators, the ID dataset is partitioned into training and test sets, while the OOD dataset is used entirely as a test set. These partitions are such that no test data is used as part of model training or development, avoiding data leakage. The size of each test set is presented in Table~\ref{tab:dataset_info}. Additionally, for the evaluation of tunable confidence estimators, each test set is then randomly partitioned into tuning and test sets with varying proportions, as explained in Section~\ref{ssec:experiments-tunable}.

Further details, including task description, training procedures, and dataset distributions are available in Appendix~\ref{app:tasks}.}


\begin{table}
\caption{Descriptive information about each test set. The foreground ratio is the proportion of foreground pixels/voxels across all images in the dataset.}
\centering
\renewcommand{\arraystretch}{1.15}
\begin{tabular}{@{} p{3cm} cccc @{}}
\toprule
\makecell[l]{\textbf{Segmentation Task}} & \makecell[c]{\textbf{Image Size}\\ \textbf{(pixels or voxels)}} & \textbf{Test Set} & \textbf{Size} & \makecell[c]{\textbf{Foreground}\\ \textbf{Ratio}} \\
\midrule
Brain Tumor & 155 $\times$ 240 $\times$ 240 & ID & \new{73} & \new{0.011} \\
Breast Cancer & $256 \times 256$ & ID & 98 & 0.094 \\
Skin Cancer & $512 \times 512$ & ID & 1000 & 0.238 \\
\multirow{2}{*}{MS-WML} & \multirow{2}{*}{\makecell{(154--212)${}\times{}$(212--256)\\${}\times{}$(151--270)}} & ID & 33 & 0.001 \\
 &  & OOD & 25 & 0.002 \\
\multirow{2}{*}{Optic Cup} & \multirow{2}{*}{$575 \times 575$} & ID & 240 & 0.045 \\
 &  & OOD & 1430 & 0.105 \\
\multirow{2}{*}{Polyp} & \multirow{2}{*}{$352 \times 352$} & ID & 164 & 0.123 \\
 &  & OOD & 636 & 0.0616 \\ 
 \bottomrule
\end{tabular}
\label{tab:dataset_info}
\end{table}

\subsection{Empirical Bounds}

In the synthetic experiments of Section~\ref{sec:synth}, we have observed that the risk-coverage performance achieved by $\SDC(\bph)$ is almost identical to that of $\IDC(\bph)$, which is optimal under the assumption that the available probabilistic model is correct. While it is computationally infeasible to compute $\IDC(\bph)$ for real-world data, Theorem~\ref{theo:main} provides a bound on their relative error that can be easily computed.
Table~\ref{tab:result_bounds} shows statistics on $\epsilon(k,\mu,\lambda)$ across all tasks, where we can see that it is at most $4.4 \times 10^{-3}$ and often much smaller. Thus, $\SDC(\bph)$ is effectively indistinguishable from $\IDC(\bph)$, confirming the theoretical insights that form the basis of our proposed estimator.


\begin{table}
\caption{Summary statistics of the bound $\epsilon(k, \mu, \lambda)$ on the relative error between $\SDC(\bph)$ and $\IDC(\bph)$ across all tested datasets, including both ID and OOD data.}
\centering
\renewcommand{\arraystretch}{1.15}
\begin{tabular}{lccc}
\toprule
\textbf{Segmentation Task} & \textbf{Test Set} & \textbf{Max($\epsilon$)} & \textbf{Mean($\epsilon$)} \\ 
\midrule
Brain Tumor & ID & \new{$2.1 \times 10^{-6}$} & \new{$2.5 \times 10^{-7}$} \\ 
Breast Cancer & ID & $2.9 \times 10^{-4}$ & $1.8 \times 10^{-5}$ \\ 
Skin Cancer & ID & $1.2 \times 10^{-4}$ & $1.0 \times 10^{-6}$ \\ 
\multirow{2}{*}{MSWML} & ID & $1.8 \times 10^{-4}$ & $2.0 \times 10^{-5}$ \\
 & OOD & $1.4 \times 10^{-3}$ & $1.2 \times 10^{-4}$ \\
\multirow{2}{*}{Optic Cup} & ID & $2.7 \times 10^{-5}$ & $2.5 \times 10^{-6}$ \\ 
 & OOD & $4.8 \times 10^{-4}$ & $4.4 \times 10^{-6}$ \\ 
\multirow{2}{*}{Polyp} & ID & $2.7 \times 10^{-5}$ & $6.8 \times 10^{-7}$ \\
 & OOD & $4.4 \times 10^{-3}$ & $1.5 \times 10^{-5}$ \\
\bottomrule
\end{tabular}
\label{tab:result_bounds}
\end{table}

\subsection{Experiments with non-tunable confidence estimator}


We first evaluate our proposed confidence estimator against several non-tunable baselines: aMSP, aNE, MMMC, TLA, and \new{PLA}. Although TLA is technically a tunable confidence estimator, it only requires the estimation of a single parameter $\tau$, which can be done entirely using unlabeled data. To simplify the analysis and make a generous evaluation of TLA, we used a fixed $\tau$ estimated using the entire test set (for each test set), effectively making it non-tunable. 

Figure~\ref{fig:result_rc_curves}
\begin{figure}[!ht]
    \centering

    \begin{subfigure}[b]{0.4\textwidth} 
        \includegraphics[width=\textwidth]{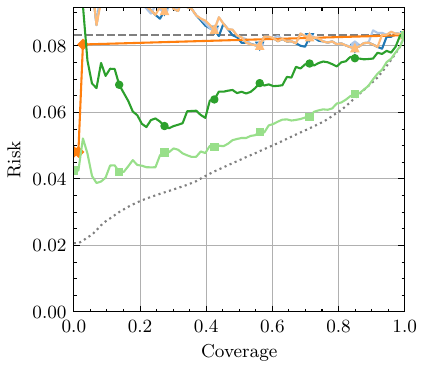}
        \caption{Brain Tumor}
    \end{subfigure}
    \hspace{0.05\textwidth} 
    \begin{subfigure}[b]{0.4\textwidth} 
        \includegraphics[width=\textwidth]{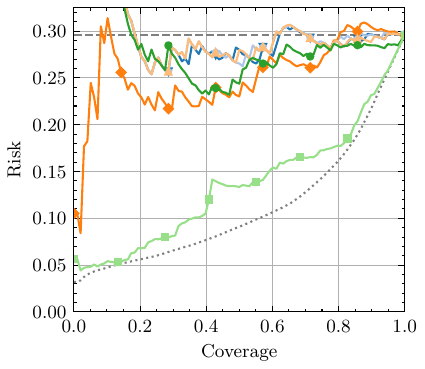}
        \caption{Breast Cancer}
    \end{subfigure}

    \vspace{0.2cm} 
    \begin{subfigure}[b]{0.4\textwidth}
        \includegraphics[width=\textwidth]{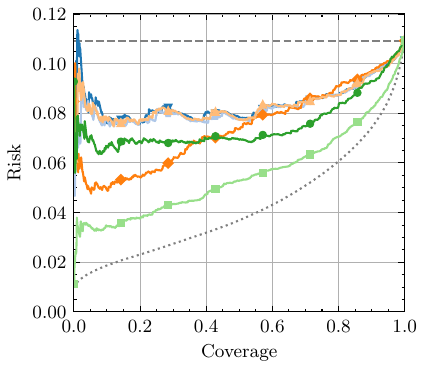}
        \caption{Skin Cancer}
    \end{subfigure}
    \hspace{0.05\textwidth}
    \begin{subfigure}[b]{0.4\textwidth}
        \includegraphics[width=\textwidth]{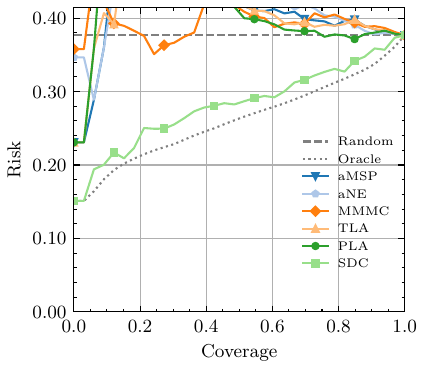}
        \caption{MSWML}
    \end{subfigure}

    \vspace{0.2cm} 
    \begin{subfigure}[b]{0.4\textwidth}
        \includegraphics[width=\textwidth]{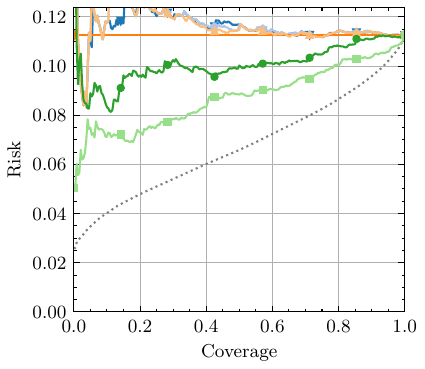}
        \caption{Optic Cup}
    \end{subfigure}
    \hspace{0.05\textwidth} 
    \begin{subfigure}[b]{0.4\textwidth}
        \includegraphics[width=\textwidth]{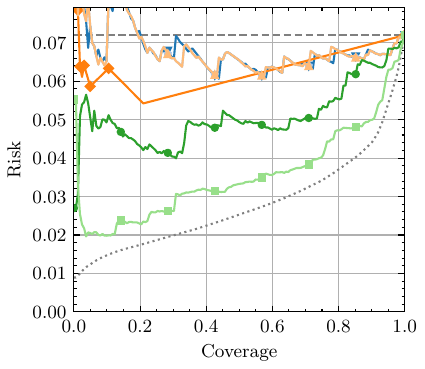}
        \caption{Polyp}
    \end{subfigure}

    \caption{Risk-coverage curves on ID datasets using non-tunable confidence estimators.}
    \label{fig:result_rc_curves}
\end{figure}%
shows RC curves for these confidence estimators across all tasks on ID data.
For reference, we include the expected performance of a random ordering of predictions (denoted ``Random''), which is always equal to the full-coverage risk.
Furthermore, we also include an ``Oracle'' bound, corresponding to a confidence estimator that
always perfectly orders predictions according to the evaluation metric. (Note that this bound is not necessarily achievable, since it requires access to the ground-truth labels.)
As can be seen, SDC consistently outperforms all non-tunable baselines and achieves results close to the Oracle.


Figure~\ref{fig:result_rc_curves_OOD} 
\begin{figure}
    \centering

    \begin{subfigure}[b]{0.4\textwidth} 
        \includegraphics[width=\textwidth]{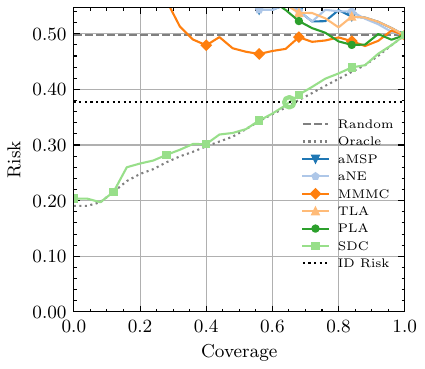}
        \caption{MSWML}
    \end{subfigure}
    \hspace{0.05\textwidth} 
    \begin{subfigure}[b]{0.4\textwidth} 
        \includegraphics[width=\textwidth]{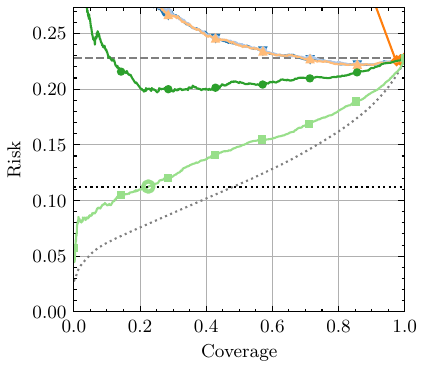}
        \caption{Optic Cup}
    \end{subfigure}
    
    \vspace{0.2cm} 
    \begin{subfigure}[b]{0.4\textwidth}
        \includegraphics[width=\textwidth]{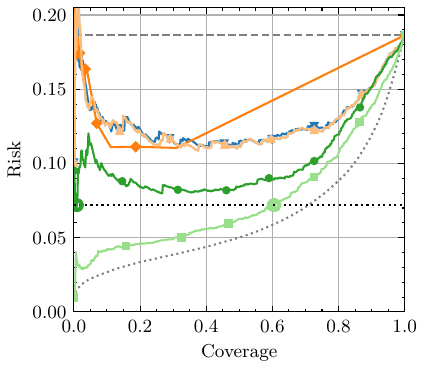}
        \caption{Polyp}
    \end{subfigure}

    \caption{Risk-coverage curves on OOD datasets using non-tunable confidence estimators. Circle marker indicates the point at which the selective prediction model matches the full-coverage ID risk.}
    \label{fig:result_rc_curves_OOD}
\end{figure}
shows the same evaluation on OOD data, where we additionally indicate the (full-coverage) risk on the corresponding ID data; this is relevant since one possible use case for selective classification on OOD data is to allow a model to achieve the same performance obtained on ID data, albeit at a smaller coverage. When achievable, the operating point where the selective and full-coverage ID risks coincide is denoted by a circle. Again, the SDC consistently achieves the lowest risk across all coverage levels. Notably, \new{although it is not the only method that reaches, at some non-zero coverage, a selective risk equivalent to the full-coverage ID risk, it remains the only one demonstrating this behavior consistently across datasets. In contrast, PLA attains such a point only for the Polyp OOD scenario, at a marginal coverage of approximately 0.79\%.}




\subsection{Experiments with tunable confidence estimators}
\label{ssec:experiments-tunable}


We now compare the SDC against tunable confidence estimators.
Following \citet{cattelan_how_2023}, we simulate a realistic scenario where we have a single dataset that must serve as both a tuning set (for training confidence score functions) and a test set (for evaluation). In our experiments, we randomly partition the reserved test data into a tuning set and a new test set. For the tunable baseline, we use AEF, which predicts segmentation performance by leveraging automatically extracted features. Consistent with \citet{jungo_analyzing_2020}, we follow a similar approach but increase the number of estimators in the random forest from 10 to 100 to improve predictive performance. Additionally, we propose a combination of AEF with SDC as an additional feature, referred to as AEF+SDC. \new{Finally, we also include PLA* as a tunable confidence estimator. In this case, the tuning set is used to select the optimal patch size from a grid ranging from 1 to 200, with finer increments for smaller patches, based on the configuration that yields the lowest AURC. The selected patch size is then applied to compute the results on the test set.}

To quantify the impact of tuning set size, we vary the proportion of data allocated to training the confidence score functions, from as few as 2 images up to half of the dataset, and measure the resulting test AURC. For each split, we repeat the random partitioning 50 times and report mean and one standard deviation (as a solid curve and a shaded region, respectively). We also include the Random and Oracle references, as the upper and lower dashed lines, respectively.

Figure~\ref{fig:result_tuning} 
\begin{figure}
    \centering
    \begin{subfigure}[b]{0.4\textwidth} 
        \vskip 0pt
        \includegraphics[width=\textwidth]{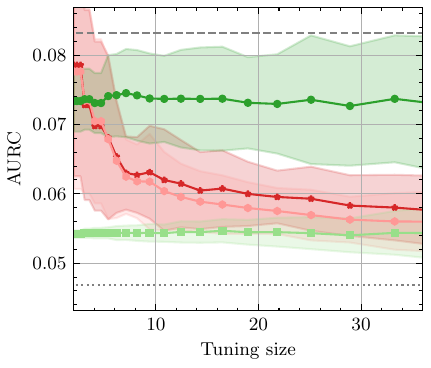}
        \caption{Brain Tumor}
    \end{subfigure}
    \hspace{0.05\textwidth} 
    \begin{subfigure}[b]{0.4\textwidth} 
        \vskip 0pt
        \includegraphics[width=\textwidth]{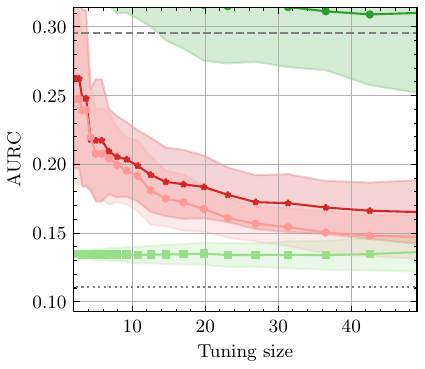}
        \caption{Breast Cancer}
    \end{subfigure}

    \vspace{0.2cm} 
    \begin{subfigure}[b]{0.4\textwidth}
        \vskip 0pt
        \includegraphics[width=\textwidth]{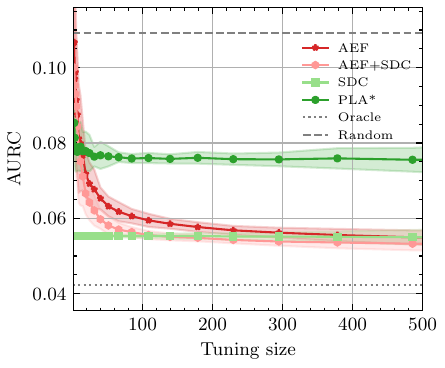}
        \caption{Skin Cancer}
    \end{subfigure}
    \hspace{0.05\textwidth} 
    \begin{subfigure}[b]{0.4\textwidth}
        \vskip 0pt
        \includegraphics[width=\textwidth]{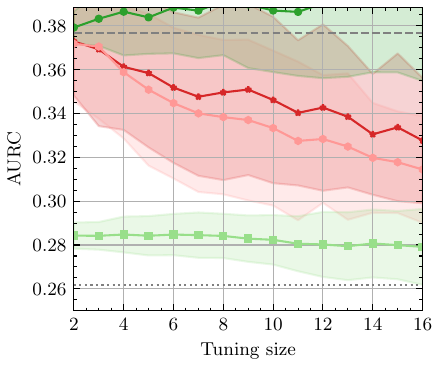}
        \caption{MSWML}
    \end{subfigure}

    \vspace{0.2cm} 
    \begin{subfigure}[b]{0.4\textwidth}
        \vskip 0pt
        \includegraphics[width=\textwidth]{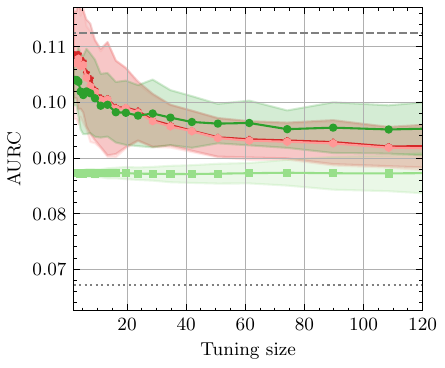}
        \caption{Optic Cup}
    \end{subfigure}
    \hspace{0.05\textwidth} 
    \begin{subfigure}[b]{0.4\textwidth}
        \vskip 0pt
        \includegraphics[width=\textwidth]{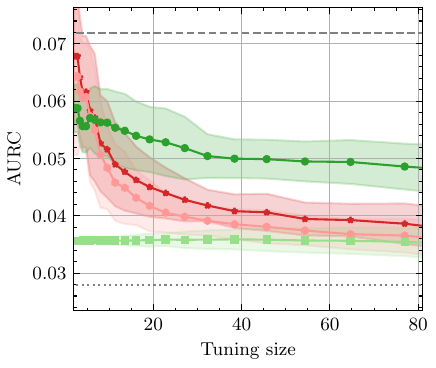}
        \caption{Polyp}
    \end{subfigure}

    \caption{AURC as a function of the tuning set size on ID datasets.} 
    \label{fig:result_tuning}
\end{figure}
shows the results on ID data. SDC consistently outperforms tunable baselines, even with very large tuning sets. As the tuning size grows, tunable estimators may approach SDC's performance; however, the reliance on more data can be a practical limitation. In contrast, SDC achieves results close to the oracle bound across a wide range of conditions without requiring additional data.

\new{Figure~\ref{fig:result_tuning_aurc_OOD} shows the same evaluation on OOD data.} We can see that SDC outperforms tunable methods even with small tuning sets and, while the performance of tunable methods converges toward that of SDC as the tuning set size increases, SDC remains competitive, particularly in the MSWML task where its performance nearly reaches the oracle bound.

\new{While the AURC provides a convenient summary of the selective prediction performance across all coverage levels, it may be difficult to interpret. To give a more palpable measure of performance, we select a meaningful operating point on the RC curve, namely, the selective risk equal to the full-coverage risk on ID data, and report the corresponding maximum coverage on OOD data that achieves that risk. As discussed in the previous section, this measure answers the practical question of how much of the OOD data can be safely accepted while maintaining a level of performance that is at least as good as the baseline performance achieved on ID data without rejection, with selective prediction acting to mitigate performance degradation by rejecting uncertain cases. The corresponding results are shown in Figure~\ref{fig:result_tunable_curves_OOD}, where we again vary the tuning set size and report mean and one standard deviation as described above. As can be seen, the SDC outperforms all other confidence estimators on all datasets, without any need for tuning data.}



%
\begin{figure}
    \centering
    \begin{subfigure}[t]{0.4\textwidth} 
        \vskip 0pt
        \includegraphics[width=\textwidth]{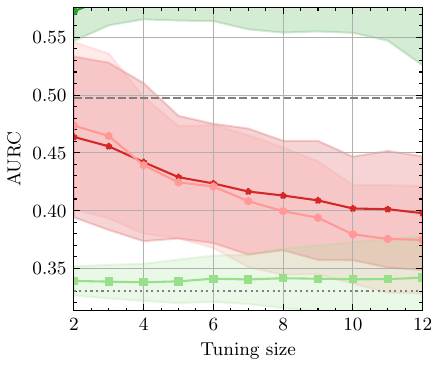}
        \caption{MSWML}
    \end{subfigure}
    \hspace{0.05\textwidth} 
    \begin{subfigure}[t]{0.4\textwidth} 
        \vskip 0pt
        \includegraphics[width=\textwidth]{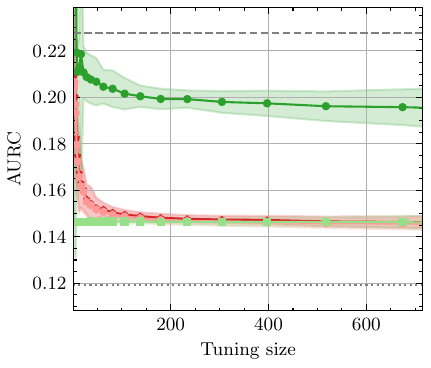}
        \caption{Optic Cup}
    \end{subfigure}  

    \vspace{0.2cm} 
    \begin{subfigure}[t]{0.4\textwidth}
        \includegraphics[width=\textwidth]{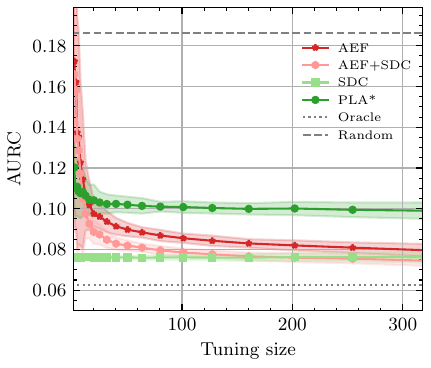}
        \caption{Polyp}
    \end{subfigure}

    \caption{\new{AURC as a function of the tuning set size on OOD datasets.}}
    \label{fig:result_tuning_aurc_OOD}
\end{figure}
\begin{figure}
    \centering

    \begin{subfigure}[b]{0.4\textwidth} 
        \vskip 0pt
        \includegraphics[width=\textwidth]{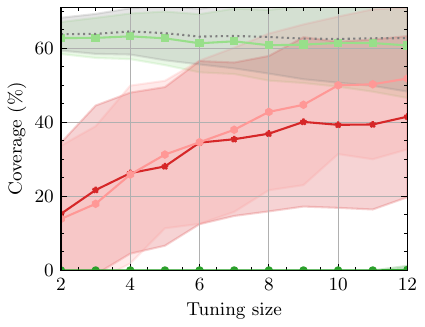}
        \caption{MSWML}
    \end{subfigure}
    \hspace{0.05\textwidth} 
    \begin{subfigure}[b]{0.4\textwidth} 
        \vskip 0pt
        \includegraphics[width=\textwidth]{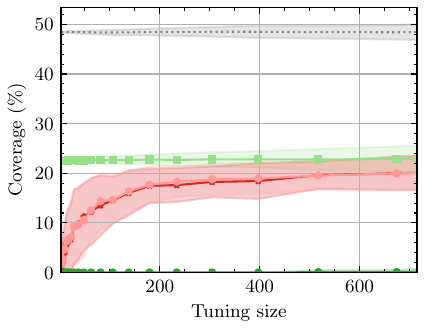}
        \caption{Optic Cup}
    \end{subfigure}  

    \vspace{0.2cm} 
    \begin{subfigure}[b]{0.4\textwidth}
        \includegraphics[width=\textwidth]{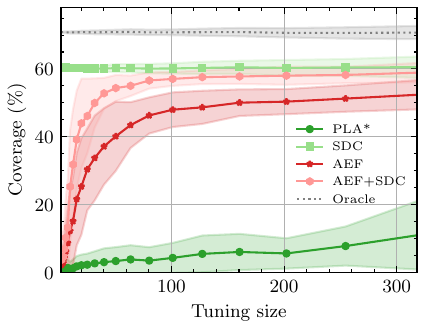}
        \caption{Polyp}
    \end{subfigure}

    \caption{Maximum OOD coverage achieved by tunable confidence estimators and SDC at equivalent ID risk.}
    \label{fig:result_tunable_curves_OOD}
\end{figure}
%



\subsection{Summary}

Table~\ref{tab:aurcs_id_ood} presents the AURC results for all confidence estimators, with performance reported for both ID and OOD data. For the tunable estimators, the mean and standard deviation over 50 random splits are provided for each tuning set proportion. As observed in the table, SDC achieves the best performance across all datasets and scenarios (both ID and OOD), with the only exception being the Skin Cancer dataset, where AEF+SDC with a 50\% tuning proportion slightly outperforms SDC in terms of AURC. However, when considering the standard deviation, AEF+SDC yields results comparable to SDC.

\begin{table}
\caption{AURCs of non-tunable confidence estimators for ID and OOD data. Lower values are better. For the tunable confidence estimators AEF and AEF+SDC, tuning proportions of 10\% and 50\% were used. Bold numbers indicate lowest values (on average).}
\setlength{\tabcolsep}{5pt}
\renewcommand{\arraystretch}{1.15}
\centering
\begin{tabular}{lccccccccc}
\toprule
 & \new{\makecell{Brain \\ Tumor}} & \makecell{Breast \\ Cancer} & \makecell{Skin \\ Cancer} & \multicolumn{2}{c}{MSWML} & \multicolumn{2}{c}{Optic Cup} & \multicolumn{2}{c}{Polyp} \\
 & ID & ID & ID & ID & OOD & ID & OOD & ID & OOD \\
\midrule
aMSP   & 0.088 & 0.341 & 0.085 & 0.414 & 0.600 & 0.116 & 0.270 & 0.068 & 0.129 \\
aNE    & 0.089 & 0.343 & 0.083 & 0.418 & 0.595 & 0.117 & 0.268 & 0.068 & 0.128 \\
MMMC   & 0.081 & 0.254 & 0.074 & 0.394 & 0.544 & 0.113 & 0.605 & 0.063 & 0.141 \\
\new{PLA}   & 0.070 & 0.329 & 0.075 & 0.408 & 0.603 & 0.101 & 0.220 & 0.051 & 0.103 \\
TLA   & 0.089 & 0.342 & 0.084 & 0.412 & 0.612 & 0.116 & 0.268 & 0.068 & 0.128 \\
SDC    & \textbf{0.054} & \textbf{0.135} & 0.055 & \textbf{0.282} & \textbf{0.338} & \textbf{0.087} & \textbf{0.146} & \textbf{0.036} & 0.076 \\
\new{PLA* (10\%)} & \meanstd{0.074}{0.006} & \meanstd{0.331}{0.022} & \meanstd{0.076}{0.001} & \meanstd{0.383}{0.012} & \meanstd{0.568}{0.027} & \meanstd{0.098}{0.006} & \meanstd{0.200}{0.005} & \meanstd{0.054}{0.006} & \meanstd{0.101}{0.004} \\
\new{PLA* (50\%)} & \meanstd{0.073}{0.010} & \meanstd{0.310}{0.058} & \meanstd{0.076}{0.003} & \meanstd{0.395}{0.041} & \meanstd{0.582}{0.059} & \meanstd{0.095}{0.005} & \textbf{\meanstd{0.196}{0.008}} & \meanstd{0.048}{0.004} & \meanstd{0.099}{0.004} \\
AEF (10\%) & \meanstd{0.063}{0.005} & \meanstd{0.202}{0.027} & \meanstd{0.060}{0.002} & \meanstd{0.365}{0.033} & \meanstd{0.465}{0.060} & \meanstd{0.098}{0.005} & \meanstd{0.149}{0.002} & \meanstd{0.046}{0.006} & \meanstd{0.088}{0.003} \\
AEF (50\%) & \meanstd{0.058}{0.005} & \meanstd{0.165}{0.023} & \meanstd{0.055}{0.002} & \meanstd{0.330}{0.038} & \meanstd{0.404}{0.051} & \meanstd{0.092}{0.004} & \textbf{\meanstd{0.146}{0.003}} & \meanstd{0.038}{0.004} & \meanstd{0.080}{0.003} \\
AEF+SDC (10\%) & \meanstd{0.062}{0.005} & \meanstd{0.194}{0.025} & \meanstd{0.056}{0.001} & \meanstd{0.368}{0.033} & \meanstd{0.469}{0.061} & \meanstd{0.098}{0.005} & \meanstd{0.148}{0.002} & \meanstd{0.043}{0.005} & \meanstd{0.081}{0.002} \\
AEF+SDC (50\%) & \meanstd{0.056}{0.004} & \meanstd{0.147}{0.016} & \textbf{\meanstd{0.053}{0.002}} & \meanstd{0.316}{0.030} & \meanstd{0.374}{0.049} & \meanstd{0.092}{0.004} & \textbf{\meanstd{0.146}{0.003}} & \textbf{\meanstd{0.036}{0.003}} & \textbf{\meanstd{0.075}{0.003}} \\
\midrule
Oracle & 0.047 & 0.111 & 0.042 & 0.262 & 0.330 & 0.067 & 0.119 & 0.028 & 0.062 \\
\bottomrule
\end{tabular}
\label{tab:aurcs_id_ood}
\end{table}

Finally, Table~\ref{tab:coverage} further illustrates the potential of selective prediction on OOD data by reporting the maximum OOD coverage that achieves a selective risk equal to full-coverage risk on ID data. Similar to Table~\ref{tab:aurcs_id_ood}, the results for tunable confidence estimators are provided as the mean and standard deviation over 50 random splits for each tuning set proportion. The table shows that SDC outperforms the other estimators by a wide margin. 

\begin{table}
\caption{Maximum coverage of the OOD dataset using the designated confidence estimator that reproduces the ID risk at 100\% coverage (higher is better). Blank values (-) indicate that the corresponding ID risk could not be achieved for any coverage levels. For the tunable confidence estimators AEF and AEF+SDC, tuning proportions of 10\% and 50\% were used. Bold numbers indicate highest values (on average).}
\centering
\renewcommand{\arraystretch}{1.15}
\begin{tabular}{@{} p{4cm} *{3}{>{\centering\arraybackslash}p{2cm}} @{}}
\toprule
 & MSWML & Optic Cup & Polyp \\
\midrule
aMSP           & -         & -         & -         \\
aNE            & -         & -         & -         \\
MMMC           & -         & -         & -         \\
\new{PLA}            & -         & -         & 0.79\%    \\
TLA            & -         & -         & -         \\
SDC            & \textbf{64.00\%}   & \textbf{22.45\%}   & \textbf{60.53\%}   \\
\new{PLA* (10\%)}     & - & -  & \meanstd{3.86\%}{4.19\%} \\
\new{PLA* (50\%)}     & \meanstd{0.41\%}{2.42\%} & \meanstd{0.17\%}{0.61\%}  & \meanstd{10.97\%}{10.07\%} \\
AEF (10\%)     & \meanstd{13.30\%}{16.63\%} & \meanstd{16.22\%}{4.38\%}  & \meanstd{43.43\%}{6.79\%} \\
AEF (50\%)     & \meanstd{37.81\%}{23.45\%} & \meanstd{20.09\%}{3.46\%}  & \meanstd{52.39\%}{4.32\%} \\
AEF+SDC (10\%) & \meanstd{10.96\%}{17.24\%} & \meanstd{16.52\%}{4.19\%}  & \meanstd{55.02\%}{3.02\%} \\
AEF+SDC (50\%) & \meanstd{50.68\%}{20.75\%} & \meanstd{20.04\%}{3.41\%}  & \meanstd{58.90\%}{2.85\%} \\
\midrule
Oracle         & 64.00\%   & 48.50\%   & 70.90\%   \\
\bottomrule
\end{tabular}
\label{tab:coverage}
\end{table}


\newpage
\clearpage

\section{Conclusions}


In this work, we introduced the Soft Dice Confidence (SDC), a novel confidence score function tailored for semantic segmentation tasks.
Our theoretical results show that, under certain conditions, the SDC, despite being computable in linear time, can closely approximate the ideal confidence score function (IDC), which is intractable for real-world data. Empirical evaluations across diverse medical imaging tasks, including in-distribution and out-of-distribution data, corroborate the theoretical results, with SDC outperforming all other confidence estimators proposed in the literature.

The SDC differs from the ideal confidence estimator in three ways: (i) it is an approximation to the expression of the ideal confidence estimator computed solely from the marginal posteriors, which in turn (ii) is only equal to the true ideal confidence estimator based on the full posteriors when conditional independence holds; and (iii) in practice, we only have access to an estimate of the marginal posteriors. The first difference is tightly bounded by our theoretical results; in particular, Corollary~\ref{cor:error-bound-eps} gives a practical way to bound the relative error, which we have extensively confirmed in our experimental results to be extremely small in all scenarios considered. As for conditional independence, while it is unlikely to be satisfied exactly in practice, we should note that it is a typical modeling assumption in semantic segmentation, due to the sheer impracticality of modeling the full posterior in any meaningfully-sized segmentation problem. \newnew{Nevertheless, it remains as a limitation of our approach.} The third difference, on the other hand, appears to be the most promising as a direction for future improvements. As we show in our experiments with simulated data, estimation errors on the marginal posteriors can significantly degrade the performance of the SDC (as well as of the IDC when computed with the same estimated marginals, to which the SDC remains tightly bounded). 

Naturally, one can always approach this estimation problem explicitly, by pursuing a better probabilistic model underlying the segmentation model. Alternatively, one can approach it implicitly, by tuning hyperparameters of a confidence estimator on additional data not used for training the model. We have observed such tuning to be beneficial to the tunable confidence estimators evaluated in this work; however, none was able to outperform the SDC, except in a single case where it was surpassed slightly by a hybrid of AEF~\citep{jungo_analyzing_2020} and SDC itself, after consuming an expressive amount of tuning data. In contrast, the SDC does not require tuning and is extremely efficient to compute. Its simplicity and strong empirical performance make it an appealing choice, particularly in low-resource settings where tuning data is scarce.

It is noteworthy that the SDC achieved comparable performance to the Oracle bound in tasks like MSWML and Breast Cancer segmentation, implying that it is very close to the ideal confidence estimators in such cases. On the other hand, it presented a significant gap in tasks such as Optic Cup segmentation, showing that there is still room for improvement. One suggestion for future work is to develop a tunable confidence estimator that generalizes the SDC, as it would already start from a theoretically grounded foundation. 

\newpage

\appendix

\section{Proof of Theorem \ref{theo:main}}\label{app:theorem}

We start with a useful technical lemma.

\begin{lemma}\label{lemma:prob-function-max}
    Let $f:[0,1]^n \to \R^+$ be a function of the form
    \begin{equation}
    f(x_1,\ldots,x_n) = \sum_{\by \in \mathcal{Y}} \left( \prod_{i=1}^{n} x_i^{y_i}(1-x_i)^{1-y_i} \right)  \frac{1}{c + \sum_{j=1}^{n} y_j},
    \end{equation} where $c>0$.
    Then, for any $s\in [0, n]$,
    \begin{equation}\label{eq:lemma1-max-problem}
\left( \mu,\ldots,\mu \right) = \argmax_{\substack{x_1,\ldots,x_n \in [0,1]: \\ x_1+\cdots+x_n = s}}\, f\left( x_1,\ldots,x_n \right)
    \end{equation}
    where $\mu = s/n$.
\end{lemma}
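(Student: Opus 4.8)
The plan is to reinterpret $f$ probabilistically and then reduce the constrained optimization to a one-dimensional pairwise adjustment. If $Y_1,\ldots,Y_n$ are independent with $Y_i \sim \mathrm{Bernoulli}(x_i)$ and $S = \sum_{i=1}^n Y_i$, then $\prod_{i} x_i^{y_i}(1-x_i)^{1-y_i}$ is exactly $\Pr(\mathbf{Y}=\mathbf{y})$, so that
\[
f(x_1,\ldots,x_n) = \mathbb{E}\!\left[\frac{1}{c+S}\right].
\]
The feasible set $\{x\in[0,1]^n : x_1+\cdots+x_n = s\}$ is compact and $f$ is continuous, so a maximizer exists; moreover the candidate point $(\mu,\ldots,\mu)$ is feasible since $\mu = s/n \in [0,1]$. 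It therefore suffices to show that every maximizer has all coordinates equal.

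First I would establish a pairwise monotonicity fact. Fix all coordinates except $x_1$ and $x_2$, and let $S' = Y_3 + \cdots + Y_n$, which is independent of $(Y_1,Y_2)$. Conditioning on $S'$ gives
\[
f = \sum_{m} \Pr(S'=m)\, \mathbb{E}\!\left[\frac{1}{(c+m)+Y_1+Y_2}\right].
\]
Writing $c_m = c+m>0$ and expanding the inner expectation in terms of $p = x_1+x_2$ and $q = x_1 x_2$, its dependence on $q$ (for fixed $p$) carries the coefficient $\tfrac{1}{c_m} - \tfrac{2}{c_m+1} + \tfrac{1}{c_m+2}$, which is precisely the second finite difference of the convex function $t\mapsto 1/t$ at $t=c_m$ and hence strictly positive (it equals $\tfrac{2}{c_m(c_m+1)(c_m+2)}$). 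Consequently, with $x_1+x_2$ held fixed, each conditional term, and therefore $f$ itself, is strictly increasing in the product $x_1 x_2$.

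I would then finish with an averaging (smoothing) argument. Suppose a maximizer $x^*$ had two unequal coordinates $x_i^* \neq x_j^*$. Replacing both by their average $\bar x = (x_i^*+x_j^*)/2$ preserves the coordinate sum and keeps both entries in $[0,1]$, while strictly increasing their product since $\bar x^2 > x_i^* x_j^*$. By the pairwise fact this strictly increases $f$, contradicting maximality. Hence every maximizer has all coordinates equal, and the common value is forced to be $s/n = \mu$, which proves \eqref{eq:lemma1-max-problem}.

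The main obstacle is the pairwise computation: one must reorganize the two-variable Bernoulli expectation so that the coefficient of $x_1 x_2$ is recognized as a second difference of $1/t$; once this identification is made, convexity delivers the required strict positivity and the remainder is routine. A slicker but essentially equivalent alternative would be to verify that $f$ is \emph{Schur-concave} via the Schur--Ostrowski criterion, since $(\mu,\ldots,\mu)$ is majorized by every feasible point; I prefer the explicit averaging argument here, as it avoids differentiating the full $2^n$-term sum.
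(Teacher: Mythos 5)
Your proof is correct, but it follows a genuinely different route from the paper's. The paper argues by induction on $n$: after an explicit $n=2$ computation, it peels off one coordinate, writing $f(x_1,\ldots,x_n) = (1-x_n)\,f(x_1,\ldots,x_{n-1};c) + x_n\, f(x_1,\ldots,x_{n-1};c+1)$, applies the inductive hypothesis to each $(n-1)$-variable subproblem with shifted parameters $c$ and $c+1$, and then uses the arbitrariness of which coordinate was peeled off to force all coordinates of a maximizer to be equal. You instead give a smoothing (exchange) argument: condition on all but two coordinates, note that the coefficient of $x_1x_2$ in each conditional term is the second difference $\tfrac{1}{c_m}-\tfrac{2}{c_m+1}+\tfrac{1}{c_m+2}=\tfrac{2}{c_m(c_m+1)(c_m+2)}>0$, so $f$ is strictly increasing in $x_1x_2$ at fixed $x_1+x_2$, and then averaging two unequal coordinates of a maximizer (which exists by compactness) gives a strict improvement, a contradiction. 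The core two-variable algebra is identical to the paper's $n=2$ base case, so the essential computation is shared; what your route buys is (i) uniqueness of the maximizer for free, and (ii) avoidance of a subtle point in the paper's induction, where concluding $x_1^*=\cdots=x_{n-1}^*$ from $f(x^*)=f(\mu',\ldots,\mu',x_n^*)$ implicitly requires the inductive hypothesis to deliver the \emph{unique} maximizer (this does hold, since the $n=2$ objective is strictly concave along the constraint, but the paper does not spell it out). What the paper's induction buys is a purely algebraic, recursive argument that stays within the parameterized family $f(\cdot\,;b)$ and needs only the one-dimensional base computation. Your closing remark is also apt: the statement is exactly Schur-concavity of $f$, with $(\mu,\ldots,\mu)$ majorized by every feasible point, and your pairwise step is the standard exchange-style verification of it.
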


\begin{proof}
    The case $n=1$ is trivial as $x=\mu=s$ is the only solution to \eqref{eq:lemma1-max-problem}.
    For functions with $n=2$ variables, simple algebraic manipulation shows that
    \begin{align*}
        f(x_1,x_2) &= \sum_{y_1,y_2 \in \left\{ 0,1 \right\} } \left( x_1^{y_1}(1-x_1)^{1-y_1} x_2^{y_2}(1-x_2)^{1-y_2} \right)  \frac{1}{c + y_1 + y_2}  \\
	&= \frac{1}{c} - \frac{1}{c(c+1)}(x_1+x_2) + \frac{2}{c(c+1)(c+2)}x_1x_2
    .\end{align*}
    Given $x_1+x_2=s$, the above can be written as \[
    f(x_1,x_2) = \frac{1}{c} - \frac{1}{c(c+1)}s + \frac{2s}{c(c+1)(c+2)}x_2 - \frac{2}{c(c+1)(c+2)}x_2^2
    ,\] which has a maximum at $x_2 = s / 2 = \mu$, implying that $x_1 = \mu$ as well.

    Now assume that \eqref{eq:lemma1-max-problem} is true for functions $f$ with up to $n-1$ variables (inductive hypothesis).
    Then, it holds for
    \begin{equation}\label{eq:definition-parameterized-function}
	f(x_1,\ldots,x_{n-1};b) = \sum_{y_1,\ldots,y_{n-1}} \left( \prod_{i=1}^{n-1} x_{i}^{y_{i}}(1-x_{i})^{1-y_{i}} \right)  \frac{1}{b + y_1+\cdots +y_{n-1}} 
    ,\end{equation}
    as long as $b>0$.
    Furthermore, $f(x_1,\ldots,x_n)$ can be rewritten as
    \begin{equation*}
	f(x_1,\ldots,x_n) = \underbrace{(1-x_n)f(x_1,\ldots,x_{n-1};c)}_{y_n=0} + \underbrace{x_n f(x_1,\ldots,x_{n-1};c+1)}_{y_n=1}
    .\end{equation*}

    We want to show that if $\left( x_1^{*},\ldots,x_{n}^{*} \right) \in \left[ 0,1 \right]^{n} $ is a point that maximizes $f$ subject to $x_1^{*}+\cdots +x_{n}^{*}=s$, then $x_1^{*}=\cdots=x_{n}^{*}$.
    We have that 
    \begin{align}
	f(x_1^{*},\ldots,x_{n}^{*}) &= \max_{\substack{x_1,\ldots,x_{n-1} \in \left[ 0,1 \right]: \\ x_1+\cdots + x_{n-1} + x_{n}^{*}=s }}\, f(x_1,\ldots,x_{n-1},x_{n}^{*}) \nonumber \\
	&= \max_{\substack{x_1,\ldots,x_{n-1} \in \left[ 0,1 \right]: \\ x_1+\cdots + x_{n-1} =s-x_{n}^{*} }}\, (1-x_n^{*})f(x_1,\ldots,x_{n-1};c) + x_n^{*} f(x_1,\ldots,x_{n-1};c+1) \nonumber \\
    \begin{split}
	    &\le (1-x_{n}^{*}) \max_{\substack{x_1,\ldots,x_{n-1} \in \left[ 0,1 \right]:  \\ x_1+\cdots +x_{n-1}=s-x_{n}^{*}}}\, f(x_1,\ldots,x_{n-1};c) \\
	    &\quad\; + x_{n}^{*} \max_{\substack{x_1,\ldots,x_{n-1} \in \left[ 0,1 \right]: \\ x_1+\cdots +x_{n-1}=s-x_{n}^{*}}}\, f(x_1,\ldots,x_{n-1};c+1)
	\end{split} \nonumber \\
	&= (1-x_n^{*})f(\mu',\ldots,\mu';c) + x_n^{*} f(\mu',\ldots,\mu';c+1) \label{eq:parameterized-maximization} \\    
	&= f(\mu',\ldots,\mu',x_n^*) \nonumber 
    \end{align}
    where $\mu' = (s-x_n^*)/(n-1)$ and \eqref{eq:parameterized-maximization} follows from the inductive hypothesis. Thus, $x_1^{*}=\cdots=x_{n-1}^{*}$. 
    
    However, the choice for leaving $x_{n}$ out of $f(x_1,\ldots,x_{n-1};b)$ in \eqref{eq:definition-parameterized-function} was arbitrary.
    For example, we can just as well rewrite $f(x_1,\ldots,x_n)$ as \[
	f(x_1,\ldots,x_n) = \underbrace{(1-x_1)f(x_2,\ldots,x_{n};c)}_{y_1=0} + \underbrace{x_1 f(x_2,\ldots,x_{n};c+1)}_{y_1=1}
    ,\] which shows that $x_2^{*}=\cdots=x_{n}^{*}$ must hold for any optimum $(x_1^{*},\ldots,x_{n}^{*})$.
    Therefore, $x_1^{*}=\cdots=x_{n}^{*}=s / n=\mu$.
\end{proof}

\begin{corollary}\label{cor:prob-function-max}
    Let $f$ and $c$ be as in Lemma~\ref{lemma:prob-function-max}.
    If $s>0$, then
    \begin{equation}\label{eq:limit-prob-function-poisson}
    f(\mu,\ldots,\mu) \le \sum_{i=0}^{\infty} \frac{s^{i} e^{-s}}{i!} \frac{1}{c+i}
    ,\end{equation}
    with equality achieved only at the limit as $n\to \infty$.
\end{corollary}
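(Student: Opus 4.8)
The plan is to recognize $f(\mu,\ldots,\mu)$ as a binomial expectation, show it is strictly increasing in $n$ by an embedding argument built directly on Lemma~\ref{lemma:prob-function-max}, and then identify its limit as the Poisson expectation on the right-hand side of \eqref{eq:limit-prob-function-poisson}. First I would evaluate $f$ at the equal point: setting $x_1=\cdots=x_n=\mu=s/n$, every $\by$ with exactly $k$ ones contributes $\mu^k(1-\mu)^{n-k}$, and there are $\binom{n}{k}$ such vectors, so
\begin{equation}
f(\mu,\ldots,\mu) = \sum_{k=0}^{n} \binom{n}{k}\mu^k(1-\mu)^{n-k}\frac{1}{c+k} = \mathbb{E}\!\left[\frac{1}{c+K_n}\right],
\end{equation}
where $K_n \sim \mathrm{Bin}(n,s/n)$. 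Likewise the right-hand side of \eqref{eq:limit-prob-function-poisson} is $\mathbb{E}[\tfrac{1}{c+N}]$ with $N \sim \mathrm{Poisson}(s)$, so the claim reduces to bounding the binomial expectation above by the Poisson expectation, with equality only in the limit.

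Next I would establish strict monotonicity in $n$. Writing $f_n$ for the $n$-variable version, I evaluate $f_{n+1}$ at the point $(\mu,\ldots,\mu,0)$ whose first $n$ coordinates equal $\mu=s/n$ and whose last coordinate is $0$. Since the factor $x_{n+1}^{y_{n+1}}(1-x_{n+1})^{1-y_{n+1}}$ vanishes whenever $y_{n+1}=1$, only the $y_{n+1}=0$ terms survive, giving $f_{n+1}(\mu,\ldots,\mu,0)=f_n(\mu,\ldots,\mu)$. The coordinates of this point sum to $s$, so Lemma~\ref{lemma:prob-function-max} applied with $n+1$ variables identifies the \emph{unique} maximizer over $\{\sum x_i = s\}$ as the equal point $(\mu',\ldots,\mu')$ with $\mu'=s/(n+1)$. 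Because $s>0$ forces $\mu'>0=x_{n+1}$, the embedded point is not the maximizer, yielding the strict inequality
\begin{equation}
f_n(\mu,\ldots,\mu) = f_{n+1}(\mu,\ldots,\mu,0) < f_{n+1}(\mu',\ldots,\mu').
\end{equation}
Hence $g(n)\triangleq f_n(\mu,\ldots,\mu)$ is strictly increasing in $n$.

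Then I would identify the limit. Since $0\le \tfrac{1}{c+k}\le \tfrac1c$, the sequence $g(n)$ is bounded and therefore convergent. By the classical binomial-to-Poisson convergence $K_n \Rightarrow N\sim\mathrm{Poisson}(s)$, and as $x\mapsto \tfrac{1}{c+x}$ is bounded and continuous on $[0,\infty)$, convergence in distribution gives $g(n)=\mathbb{E}[\tfrac{1}{c+K_n}]\to \mathbb{E}[\tfrac{1}{c+N}]$, precisely the right-hand side of \eqref{eq:limit-prob-function-poisson}. Combining this with strict monotonicity yields $g(n) < \lim_{m\to\infty} g(m)$ for every finite $n$, i.e., the desired inequality with equality attained only as $n\to\infty$.

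The main obstacle I anticipate is the limit identification: one must justify that the finite binomial expectations genuinely converge to the infinite Poisson series rather than merely being dominated by it. The cleanest route is to invoke convergence in distribution against the bounded continuous test function $\tfrac{1}{c+x}$; alternatively, a truncation argument controls the tail $\sum_{k>M}$ uniformly in $n$ using the $\tfrac1c$ bound and then passes to the limit term by term on the finite head, where the binomial pmf converges to the Poisson pmf. The monotonicity step, by contrast, is essentially immediate once the embedding $f_{n+1}(\cdot,0)=f_n(\cdot)$ is observed and the uniqueness of the maximizer in Lemma~\ref{lemma:prob-function-max} is invoked.
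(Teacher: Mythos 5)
Your proposal is correct and takes essentially the same route as the paper's proof: the zero-padding identity $f(x_1,\ldots,x_n,0)=f(x_1,\ldots,x_n)$ combined with Lemma~\ref{lemma:prob-function-max} to compare the $n$-variable equal point with the $m$-variable equal point (a $\mathrm{Binomial}(m,s/m)$ expectation of $1/(c+w)$), followed by the binomial-to-Poisson limit as $m\to\infty$. Your packaging as a strictly increasing sequence, and your explicit justification of the limit via convergence in distribution against the bounded continuous test function $x\mapsto 1/(c+x)$, merely makes rigorous the limit step the paper states in one line.
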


\begin{proof}
    First, we note that $f(x_1,\ldots,x_{n},0) = f(x_1,\ldots,x_n)$.
    Therefore, by Lemma~\ref{lemma:prob-function-max}, for all $m\ge n$,
    \begin{align*}
        f(\mu,\ldots,\mu) &=  \max_{\substack{x_1,\ldots,x_n \in [0,1]: \\ x_1+\cdots+x_n = s}}\, f\left( x_1,\ldots,x_n \right) \\
	&= \max_{\substack{x_1,\ldots,x_n \in [0,1]: \\ x_1+\cdots+x_n = s}}\, f( x_1,\ldots,x_n,\underbrace{0,\ldots,0}_{m-n} ) \\
        &\le \max_{\substack{x_1,\ldots,x_m \in [0,1]: \\ x_1+\cdots+x_m = s}}\, f\left( x_1,\ldots,x_n,x_{n+1},\ldots,x_m \right) \\
	&= f(s / m,\ldots,s / m) \\
	&= \mathbb{E}_{w\sim \text{Binomial}(m, s / m)} \left[ \frac{1}{c + w} \right]
    \end{align*}
    with equality if and only if $m=n$.
    By taking the limit $m\to \infty$, $w$ becomes a Poisson random variable with parameter $s$, such that the expected value is as in \eqref{eq:limit-prob-function-poisson}.
\end{proof}

Before proceeding to the next results, we introduce a simplified notation. Note that, due to symmetry, $D(\by, \byh)$, $\IDC(\bp, \byh)$ and $\SDC(\bp, \byh)$ are all invariant to the same permutation of both of their arguments, i.e., if $\pi: \RR^n \to \RR^n$ is a permutation, then, e.g., $\IDC(\pi(\bp), \pi(\byh)) = \IDC(\bp, \byh)$. Thus, without loss of generality, we only need to consider the case $\hat{y}_1=\cdots=\hat{y}_k=1$, $\hat{y}_{k+1}=\cdots=\hat{y}_n=0$, where $k \in \{0,\ldots,n\}$.
This allows us to write $\IDC(\bp, \byh) = d_k(\bp)$ and $\SDC(\bp, \byh) = \sdc_k(\bp)$, where
\begin{align}
d_k(\bp) &\triangleq 
    \sum_{\by\in \mathcal{Y}} \left(  \prod_{i=1}^n p_{i}^{y_{i}} (1-p_{i})^{1-y_{i}}  \right)   
    \frac{2\sum_{j = 1}^k y_j }{k + \sum_{j=1}^n y_j} \\
\sdc_k(\bp) &\triangleq 
    \frac{2\sum_{j =1}^k p_j }{k + \sum_{j=1}^n p_j}
\end{align}
if $k>0$ and $d_0(\bp) = \sdc_0(\bp) = 0$.

Under this new notation, for $k>0$, we can express
\begin{equation}
d_k(\bp) = \mathbb{E}_{\substack{y_i \sim \text{Bernoulli}(p_i), \\ i=1,\ldots,n}}\left[ \frac{2 w_1(\by)}{k + w_1(\by) + w_0(\by)} \right] 
\end{equation}
where $w_1(\by) = \sum_{j = 1}^k y_j$ and $w_0(\by) = \sum_{j=k+1}^n y_j$.
Intuitively, $w_1(\by)$ and $w_0(\by)$ represent, respectively, the number of true-positives and false-negatives of $\byh$ with respect to $\by$.
Because each $y_i\sim \text{Bernoulli}(p_i)$, we can reframe the above expression in terms of $w_0$ and $w_1$ as variables following Poisson Binomial distributions, such that 
\begin{equation}
\label{eq:idc-expectation}
d_{k}(\bp) = \mathbb{E}_{w_1} \left[ \mathbb{E}_{w_0 } \left[ \frac{2 w_1}{k + w_1 + w_0} \right]  \right] 
\end{equation}
where $w_1 \sim \text{PoissonBinomial}(p_{1},\ldots,p_k)$ and $w_0\sim \text{PoissonBinomial}(p_{k+1},\ldots,p_n)$. This formulation will be used in the proofs of the following results.

\begin{lemma}\label{lemma:dice-lower-bound}
    For any $\bp \in \left[ 0,1 \right]^{n}$, $n\ge 1$, and $1\le k\le n$, 
    \begin{align}
d_k(\bp) & \ge 2k\mu \sum_{i=0}^{k-1} \begin{pmatrix} k-1 \\ i \end{pmatrix} \mu^{i}\left( 1 -\mu \right)^{k-1-i} \frac{1}{k+1+i+ \sum_{j=k+1}^{n} p_j} \label{eq:dice-lower-bound-1} \\
		 & \ge \frac{2k\mu}{k+1+(k-1)\mu + \sum_{j=k+1}^{n} p_i}\label{eq:dice-lower-bound-2} 
    \end{align}
    where $\mu = \frac{1}{k} \sum_{i=1}^{k} p_i$.
    Moreover, \eqref{eq:dice-lower-bound-1} is an equality if and only if $\bp$ is such that $p_1=\cdots=p_k=\mu$ and $p_{k+1},\ldots,p_n\in \left\{ 0,1 \right\} $.
    Furthermore, \eqref{eq:dice-lower-bound-2} is an equality if and only if \eqref{eq:dice-lower-bound-1} is an equality and $k=1$ or $\mu\in \left\{ 0,1 \right\}$.
\end{lemma}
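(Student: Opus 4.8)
The plan is to prove the two-step chain from the inside out, collapsing the nested Poisson--Binomial expectation \eqref{eq:idc-expectation} down to a Binomial one by two applications of Jensen's inequality together with the maximization result of Lemma~\ref{lemma:prob-function-max}, tracking the equality cases at every step. Throughout I fix $k\ge 1$ and may assume $\mu>0$, since if $p_1=\cdots=p_k=0$ then $d_k(\bp)=0$ and both bounds are trivially zero. I write $\lambda=\sum_{j=k+1}^n p_j$ for brevity.

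Starting from $d_k(\bp)=\mathbb{E}_{w_1}\mathbb{E}_{w_0}\!\left[\tfrac{2w_1}{k+w_1+w_0}\right]$ in \eqref{eq:idc-expectation}, the first step is to integrate out $w_0$. Because $w\mapsto 1/(k+w_1+w)$ is convex, Jensen's inequality gives $\mathbb{E}_{w_0}[1/(k+w_1+w_0)]\ge 1/(k+w_1+\lambda)$ with $\lambda=\mathbb{E}[w_0]$, so that $d_k(\bp)\ge \mathbb{E}_{w_1}\!\left[\tfrac{2w_1}{k+\lambda+w_1}\right]$. Since $w_0$ is independent of $w_1$ and $\mu>0$ ensures $\Pr[w_1>0]>0$, equality here forces $w_0$ to be almost surely constant, i.e.\ $p_{k+1},\dots,p_n\in\{0,1\}$ --- one of the two equality conditions claimed.

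The key step, and the one I expect to be the main obstacle, is to lower-bound $g(p_1,\dots,p_k)\triangleq\mathbb{E}_{w_1}[2w_1/(k+\lambda+w_1)]$ over the simplex $\sum_{i\le k}p_i=k\mu$. A direct attempt fails because $w\mapsto 2w/(c+w)$ is \emph{concave}, so Jensen points the wrong way and Lemma~\ref{lemma:prob-function-max} (a \emph{maximization} statement) seems unusable. The unlocking observation is the algebraic rewrite $\tfrac{2w_1}{c+w_1}=2-\tfrac{2c}{c+w_1}$ with $c=k+\lambda>0$, which turns the quantity into $g=2-2c\,f(p_1,\dots,p_k)$, where $f$ is exactly the function of Lemma~\ref{lemma:prob-function-max} with constant $c$ and $k$ variables. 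That lemma shows $f$ is maximized subject to the sum constraint at the all-equal point $(\mu,\dots,\mu)$ (and, as its proof shows, only there); since $-2c<0$, this immediately yields $g(p_1,\dots,p_k)\ge g(\mu,\dots,\mu)$, with equality iff $p_1=\cdots=p_k=\mu$.

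It then remains to evaluate $g(\mu,\dots,\mu)$ and to pass to the looser bound. At the all-equal point $w_1\sim\text{Binomial}(k,\mu)$, so $g(\mu,\dots,\mu)=2\sum_{m=0}^k\binom{k}{m}\mu^m(1-\mu)^{k-m}\tfrac{m}{k+\lambda+m}$; the identity $m\binom{k}{m}=k\binom{k-1}{m-1}$ (the $m=0$ term vanishing) and the substitution $i=m-1$ reindex this into $2k\mu\sum_{i=0}^{k-1}\binom{k-1}{i}\mu^i(1-\mu)^{k-1-i}\tfrac{1}{k+1+i+\lambda}$, which is precisely \eqref{eq:dice-lower-bound-1}; combining with the two equality conditions above gives its equality case. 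Finally, recognizing this sum as $2k\mu\,\mathbb{E}_{v\sim\text{Binomial}(k-1,\mu)}[1/(k+1+\lambda+v)]$ and applying Jensen once more to the convex map $v\mapsto 1/(k+1+\lambda+v)$, with $\mathbb{E}[v]=(k-1)\mu$, produces \eqref{eq:dice-lower-bound-2}, where equality holds iff the $\text{Binomial}(k-1,\mu)$ law is degenerate, i.e.\ $k=1$ or $\mu\in\{0,1\}$, matching the stated claim.
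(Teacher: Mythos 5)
Your proposal is correct and follows essentially the same route as the paper's proof: the same decomposition \eqref{eq:idc-expectation}, Jensen's inequality in $w_0$, the rewrite $\tfrac{w_1}{c+w_1}=1-\tfrac{c}{c+w_1}$ to invoke Lemma~\ref{lemma:prob-function-max} with $c=k+\lambda$, the identity $m\binom{k}{m}=k\binom{k-1}{m-1}$ to reach \eqref{eq:dice-lower-bound-1}, and a final Jensen step on $\mathrm{Binomial}(k-1,\mu)$ for \eqref{eq:dice-lower-bound-2}, with identical equality-case tracking. Your explicit dispatch of the degenerate case $\mu=0$ (where the first Jensen step holds with equality regardless of $p_{k+1},\ldots,p_n$) is in fact slightly more careful than the paper's treatment.
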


\begin{proof}
Starting from \eqref{eq:idc-expectation}, since $w_1 / \left( k+w_0 + w_1 \right) $ is convex in $w_0$, it follows by Jensen's inequality that
    \begin{equation}\label{eq:dk-as-expectation-of-w1}
    \begin{split}
	\frac{1}{2} d_k(\bp) &\ge \mathbb{E}_{w_1} \left[ \frac{w_1}{k + w_1 + \mathbb{E}_{w_0}\left[ w_0 \right] } \right]  \\
			     &= \mathbb{E}_{w_1} \left[ \frac{w_1}{k + w_1 + \lambda } \right]
    ,\end{split}
    \end{equation}
    where $\lambda = \sum_{i=k+1}^{n} p_i$, with equality if and only if $p_{k+1},\ldots,p_n \in \left\{ 0,1 \right\} $.

    Now, let $\mu = \frac{1}{k} \sum_{i=1}^{k} p_i$.
    By Lemma~\ref{lemma:prob-function-max}, given any $c>0$, 
    \begin{align*}
        \mathbb{E}_{w_1} \left[ \frac{1}{c+w_1} \right] &= \mathbb{E}_{\substack{y_i \sim \text{Bernoulli}(p_i), \\ i=1,\ldots,k}} \left[ \frac{1}{c + \sum_{j=1}^{k} y_j} \right]  \\
	&= \sum_{y_1,\ldots,y_k\in \left\{ 0,1 \right\}} \left( \prod_{i=1}^{k} p_i^{y_i}(1-p_i)^{1-y_i}  \right) \frac{1}{c + \sum_{j=1}^{k} y_j} \\
	&\le \sum_{y_1,\ldots,y_k\in \left\{ 0,1 \right\}} \left( \prod_{i=1}^{k} \mu^{y_i}(1-\mu)^{1-y_i}  \right) \frac{1}{c + \sum_{j=1}^{k} y_j} \\
        &= \mathbb{E}_{\substack{y_i \sim \text{Bernoulli}(\mu), \\ i=1,\ldots,k}} \left[ \frac{1}{c + \sum_{j=1}^{k} y_j} \right] = \mathbb{E}_{w_1'} \left[ \frac{1}{c+w_1'} \right]
    \end{align*}
    where $w_1'\sim \text{Binomial}(k,\mu)$.
    Therefore,
    \begin{equation}\label{eq:w1-prime-inequality}
    \begin{split}
        \mathbb{E}_{w_1} \left[ \frac{w_1}{k+\lambda+w_1} \right]
	&= 1 - \mathbb{E}_{w_1} \left[ \frac{k+\lambda}{k+\lambda+w_1} \right]  \\
	&\ge 1 - \mathbb{E}_{w_1'} \left[ \frac{k+\lambda}{k+\lambda+w_1'} \right] 
	= \mathbb{E}_{w_1'}\left[ \frac{w_1'}{k+\lambda+w_1'} \right].
    \end{split}
    \end{equation}

    Together, \eqref{eq:dk-as-expectation-of-w1} and \eqref{eq:w1-prime-inequality} yield
    \begin{align*}
        d_k(\bp) &\ge 2 \mathbb{E}_{w_1'}\left[ \frac{w_1'}{k+\lambda+w_1'} \right]  \\
	&= 2\sum_{j=0}^{k} \begin{pmatrix} k \\ j \end{pmatrix} \mu^{j}\left( 1-\mu \right)^{k-j} \frac{j}{k+\lambda+j}  \\
	&= 2\sum_{i=0}^{k-1} \frac{k}{i+1} \begin{pmatrix} k-1\\i \end{pmatrix} \mu^{i+1}(1-\mu)^{k-1-i} \frac{i+1}{k+\lambda+i+1} \\
	&= 2k \mu \sum_{i=0}^{k-1} \begin{pmatrix} k-1\\i \end{pmatrix} \mu^{i}(1-\mu)^{k-1-i} \frac{1}{k+\lambda+i+1}
    \end{align*}
    which is precisely \eqref{eq:dice-lower-bound-1}.
    Note that the above is an equality if and only if \eqref{eq:dk-as-expectation-of-w1} and \eqref{eq:w1-prime-inequality} are equalities, which happens if and only if $p_1=\cdots=p_k=\mu$ and $p_{k+1},\ldots,p_n\in \left\{ 0,1 \right\}$.
    
    Finally, to show that \eqref{eq:dice-lower-bound-2} holds, let $w_1''\sim \text{Binomial}(k-1,\mu)$. Then, by Jensen's inequality,
    \begin{align*}
	2k \mu \sum_{i=0}^{k-1} \begin{pmatrix} k-1\\i \end{pmatrix} \mu^{i}(1-\mu)^{k-1-i} \frac{1}{k+\lambda+i+1}
	&= 2k\mu \mathbb{E}_{w_1''} \left[ \frac{1}{k+\lambda+w_1''+1} \right] \\
	&\ge \frac{1}{k+\lambda +\mathbb{E}[w_1''] +1}  \\
	&= \frac{1}{k+\lambda+(k-1)\mu + 1}
    \end{align*}
    with equality if and only if $k=1$ or $\mu \in \left\{ 0,1 \right\}$.
\end{proof}

\begin{lemma}\label{lemma:dice-upper-bound}
    For any $\bp \in \left[ 0,1 \right]^{n}$, $n\ge 1$, and $1\le k\le n$, 
    \begin{equation}
        d_k(\bp) \le \sum_{i=0}^{\infty} \frac{\lambda^{i} e^{-\lambda}}{i!} \frac{2k\mu}{k+k\mu+i}
    \end{equation}
    where $\mu=\frac{1}{k}\sum_{j=1}^{k} p_j$ and $\lambda = \sum_{j=k+1}^{n} p_j$, with equality at the limit $n-k\to \infty$, provided that $p_1,\ldots,p_k \in \left\{ 0,1 \right\}$ and $p_{k+1}=\cdots=p_n=\lambda / (n-k)$.
\end{lemma}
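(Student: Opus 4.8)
The plan is to bound $d_k(\bp)$ in two stages: first eliminate the true-positive count $w_1$ by a concavity argument, then reduce the false-negative count $w_0$ to the Poisson template already established in Corollary~\ref{cor:prob-function-max}. Starting from the representation \eqref{eq:idc-expectation} and using the independence of $w_0$ and $w_1$, I would swap the order of the expectations and, for each fixed value of $w_0$, observe that the map $w_1 \mapsto \frac{2w_1}{k+w_1+w_0}$ is concave, since its second derivative equals $-4(k+w_0)/(k+w_1+w_0)^3 < 0$. Jensen's inequality then gives $\mathbb{E}_{w_1}\!\left[\frac{2w_1}{k+w_1+w_0}\right] \le \frac{2k\mu}{k+k\mu+w_0}$, because $\mathbb{E}[w_1] = \sum_{j=1}^{k} p_j = k\mu$. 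Taking the outer expectation over $w_0$ leaves $d_k(\bp) \le 2k\mu\,\mathbb{E}_{w_0}\!\left[\frac{1}{c+w_0}\right]$, where $c = k+k\mu \ge k > 0$.

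The second stage is to recognize $\mathbb{E}_{w_0}\!\left[\frac{1}{c+w_0}\right]$ as an instance of the function treated in Lemma~\ref{lemma:prob-function-max}. Since $w_0 \sim \text{PoissonBinomial}(p_{k+1},\ldots,p_n)$, this expectation equals $f(p_{k+1},\ldots,p_n)$ for $f$ of exactly the required form in the $n-k$ variables $p_{k+1},\ldots,p_n$, which sum to $\lambda$, with the positive constant $c$. Applying Lemma~\ref{lemma:prob-function-max} bounds it above by its all-equal value $f(\lambda/(n-k),\ldots,\lambda/(n-k))$, and Corollary~\ref{cor:prob-function-max} then bounds that in turn by $\sum_{i=0}^{\infty} \frac{\lambda^{i} e^{-\lambda}}{i!}\frac{1}{c+i}$. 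Multiplying through by $2k\mu$ produces precisely the claimed bound.

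For the equality claim, I would track when each inequality is tight. The Jensen step is strict unless $w_1$ is deterministic for every $w_0$, i.e.\ unless $p_1,\ldots,p_k \in \{0,1\}$; the Lemma/Corollary chain is tight exactly when $p_{k+1}=\cdots=p_n=\lambda/(n-k)$ and in the limit $n-k\to\infty$. Together these yield the stated equality condition. The degenerate case $\mu=0$ is handled immediately, since then $w_1=0$ almost surely and both sides vanish. The one step I expect to require care is the alignment in the first stage: Jensen must be applied in the concave direction in $w_1$ alone, leaving $w_0$ untouched, so that the residual $w_0$-expectation matches the template of Lemma~\ref{lemma:prob-function-max} verbatim (same functional form, same positive constant $c$). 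Once this alignment is secured, the remainder is a direct invocation of the two earlier results and presents no further difficulty.
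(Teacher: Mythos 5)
Your proposal is correct and follows essentially the same route as the paper's proof: a pointwise Jensen step in $w_1$ (for each fixed $w_0$, using concavity and $\mathbb{E}[w_1]=k\mu$) followed by reduction of $\mathbb{E}_{w_0}\left[\frac{1}{c+w_0}\right]$ to the Poisson bound, with the same equality conditions. If anything, you are slightly more careful than the paper, which cites only Corollary~\ref{cor:prob-function-max} for the second stage while implicitly also using Lemma~\ref{lemma:prob-function-max} to pass from general $p_{k+1},\ldots,p_n$ to the all-equal point---a step you make explicit.
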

\begin{proof}
Starting from \eqref{eq:idc-expectation}, by Jensen's inequality we have
    \begin{equation}\label{eq:dice-upper-bound-ineq-1}
	d_k(\bp) = \mathbb{E}_{w_0} \left[ \mathbb{E}_{w_1} \left[ \frac{2w_1}{k+w_1+w_0} \right]  \right] \le \mathbb{E}_{w_0} \left[ \frac{2k\mu}{k+k\mu+w_0} \right].
    \end{equation}
    By Corollary~\ref{cor:prob-function-max},
    \begin{equation}
    \begin{split}\label{eq:dice-upper-bound-ineq-2}
        \mathbb{E}_{w_0} \left[ \frac{2k\mu}{k+k\mu+w_0} \right] &= \sum_{y_{k+1},\ldots,y_{n}\in \left\{ 0,1 \right\}} \left( \prod_{i=k+1}^{n} p_i^{y_i}(1-p_i)^{1-y_i}  \right) \frac{2k\mu}{k+ k\mu + \sum_{j=k+1}^{n} y_j} \\
	&\le \sum_{i=0}^{\infty} \frac{\lambda^{i}e^{-\lambda}}{i!} \frac{2k\mu}{k+k\mu+i}
    .\end{split}
    \end{equation}

    Note that \eqref{eq:dice-upper-bound-ineq-1} is an equality if and only if $p_1,\ldots,p_k \in \left\{ 0,1 \right\} $, while \eqref{eq:dice-upper-bound-ineq-2} is an equality if and only if $p_{k+1}=\cdots=p_n=\lambda / (n-k)$ at the limit $n-k \to \infty$.
\end{proof}

We can now proceed to the proof of our main result.
\bigskip

\begin{proof}\textit{\hspace{-0.5ex}of Theorem~\hyperref[theo:main]{\ref{theo:main}}} 

In order to prove statement \eqref{eq:theo-null-statement}, first note that $k=0$ (i.e., $\byh = \bzero$) immediately implies $d_k(\bp) = \sdc_k(\bp) = s = 0$. Thus, assume $k>0$. In this case, it is easy to see that $\sdc_k(\bp) = 0 \iff s = 0$. Moreover, because of \eqref{eq:idc-expectation}, we can readily see that $d_k(\bp) = 0 \iff w_1 = 0 \iff p_1=\cdots=p_k =0 \iff s = 0$.

The lower bound in \eqref{eq:sdc-bounds-inequalities} follows from Lemma~\ref{lemma:dice-lower-bound}, since
\begin{align*}
\frac{d_k(\bp)}{\widetilde{d}_k(\bp)}  
&\ge \frac{1}{\widetilde{d}_k(\bp)} \frac{2k\mu}{k+1+(k-1)\mu + \lambda}  \\
&= \frac{k + k\mu + \lambda}{2k\mu } \frac{2k\mu}{k+1+(k-1)\mu + \lambda}  \\
&= \frac{k + k\mu + \lambda}{k + 1 + (k-1)\mu + \lambda} = b_L(k,\mu,\lambda).
\end{align*}
Similarly, the upper bound in \eqref{eq:sdc-bounds-inequalities} follows from Lemma~\ref{lemma:dice-upper-bound}, as
\begin{align*}
    \frac{d_k(\bp)}{\widetilde{d}_k(\bp)}  &\le \frac{1}{\widetilde{d}_k(\bp)} \sum_{i=0}^{\infty} \frac{\lambda^{i} e^{-\lambda}}{i!} \frac{2k\mu}{k+k\mu+i}  \\
    &= \frac{k + k\mu + \lambda}{2k\mu } \sum_{i=0}^{\infty} \frac{\lambda^{i} e^{-\lambda}}{i!} \frac{2k\mu}{k+k\mu+i}  \\
    &= \sum_{i=0}^{\infty} \frac{\lambda^{i} e^{-\lambda}}{i!} \frac{k + k\mu + \lambda}{k+k\mu+i} = b_U(k,\mu,\lambda).
\end{align*}

Finally, statement \eqref{eq:bounds-around-1} can be shown, from the left-hand side, by noting that $\mu\le 1$, and, thus \[
    b_L(k,\mu,\lambda) = \frac{k + k\mu + \lambda}{k + k\mu + \lambda + (1-\mu)} \le 1
\] 
while, from the right-hand side, we have \[
    b_U(k,\mu,\lambda) = \mathbb{E}_{w\sim \text{Poisson}(\lambda)} \left[ \frac{k + k\mu + \lambda}{k+k\mu+w} \right] 
\] which, by applying Jensen's inequality, yields \[
	b_U(k,\mu,\lambda) \ge \frac{k + k\mu + \lambda}{k+k\mu+\mathbb{E}_{w\sim \text{Poisson}(\lambda)} [w] } = \frac{k + k\mu + \lambda}{k+k\mu+\lambda} = 1.
\] 
\end{proof}

\section{On the Definition of the Dice Coefficient}\label{app:dice}

To the best of our knowledge, no previous work has considered the issue of the definition of the Dice coefficient $D(\by, \byh)$ when both $\by=\byh=\bzero$.
Presumably, since we cannot control~$\byh$, the issue has been unnoticed or ignored because the case $\by = \bzero$ is not expected to appear in a meaningful segmentation problem. 
Indeed, we are not aware of any published work that includes samples with $\by = \bzero$ when evaluating the segmentation performance of a model using the Dice coefficient.

The reason, we believe, is that including such negative samples would result in mixing the semantic segmentation problem with that of image-level binary classification (detection), for which the Dice metric is arguably inappropriate. 
Fundamentally, in scenarios where negative cases are conceivable (for instance, when detecting brain tumor), one is first and foremost interested in knowing whether a certain object or feature is present in an image; only when a positive detection is made may one be then interested in a fine-grained segmentation. 
Note that even if a single segmentation model is used to provide both image-level classification and segmentation predictions, it is clear that segmentation quality should only be evaluated on images that actually have a non-empty segmentation mask.

Now, if we can assume that $\by \neq \bzero$, then the issue becomes irrelevant for all practical purposes; however it may still be relevant from a mathematical standpoint. For instance, simply removing $\by=\bzero$ from the domain of $D(\by, \byh)$ would invalidate the conditional independence assumption and complicate proofs that require such independence. Alternatively, we have found that our proposed convention $D(\bzero,\bzero)=0$ leads to nicer properties and cleaner expressions. In particular, it follows from our definition that $D(\by, \bzero) = 0$ for all $\by \in \{0,1\}^n$ and $D(\bzero, \byh) = 0$ for all $\byh \in \{0,1\}^n$, from which \eqref{eq:theo-null-statement} can be concisely stated.

Finally, it is worth mentioning that the definition $D(\bzero, \bzero) = 0$ is implicitly (and perhaps inadvertently) adopted in \citet{dai_rankseg_2023}. While their definition of the Dice coefficient does not address the case $\by=\byh=\bzero$ and the proof of their Theorem~1 overlooks this case (which would lead to a 0/0 division), their definition~(4), when evaluated for $\gamma=0$ and $\tau = 0$ (in their notation), implicitly requires $D(\bzero, \bzero) = 0$ to achieve the desired result. In particular, their result implies $\IDC(\bp, \bzero) = 0$ for all $\bp$, which can only be true if we assume $D(\bzero, \bzero) = 0$.

\section{Semantic Segmentation Tasks}\label{app:tasks}
In the following, we describe the segmentation tasks and the respective models considered, as well as the datasets used for both the in-distribution (ID) and out-of-distribution (OOD) evaluations.
\new{%
In general, the distinction between ID and OOD data typically arises from differences in acquisition protocols, imaging devices, annotation procedures, or patient populations. 
For the Optic cup segmentation task, our choice of OOD dataset is described in Section~\ref{ssec:optic-cup}. For the MSWML and Polyp segmentation tasks, we have not selected the datasets ourselves, but simply used the ID and OOD datasets selected by the authors of the corresponding papers. 
}

For all model and task configurations, we set $\gamma = 0.5$ to obtain the hard predictions~$\byh$. An exception was made for multiple sclerosis white matter lesion segmentation using a baseline 3D U-Net model, where the authors recommend setting $\gamma = 0.35$ to achieve improved overall performance~\citep{malinin_shifts_2022}.

\subsection{Brain tumor segmentation}

Precise segmentation of brain tumors in magnetic resonance images (MRIs) is vital for effective surgical planning and post-operative treatment. The Brain Tumor Segmentation 2020 (BraTS 2020)~\citep{Menze2015BRATS,Bakas2017ScienceData,Bakas2018arXiv} challenge benchmarks state-of-the-art tumor segmentation algorithms using multi-modal, pre-operative brain MRIs~\citep{brats_nn_unet}. The training dataset included 369 3D brain scans of patients diagnosed with either low-grade or high-grade glioma. Each scan comprises multi-modal MRI sequences (T1-weighted, contrast-enhanced T1-weighted, T2-weighted, and T2-Flair) that have been preprocessed and manually annotated by domain experts to delineate tumor regions. The segmentation masks annotate the enhancing tumor, peritumoral edema, and necrotic/non-enhancing regions. In this study, we focus on segmenting the whole tumor region, defined as the union of all annotated regions. We randomly partitioned the dataset, yielding \new{296} images for training and validation and \new{73} images for evaluation.

We trained a 3D nnU-Net model, following the methodology proposed by~\citet{brats_nn_unet}.
To reproduce the performance achieved by the authors, we train the model using all tumor regions.
However, we consider only the whole tumor predictions for our experimental results.


\subsection{Breast Cancer Segmentation}

Breast cancer is a leading cause of mortality among women worldwide. Consequently, accurate segmentation of tumors in ultrasound (US) images is critical for delineating tissue margins during lumpectomy procedures, thereby reducing premature deaths~\citep{al-dhabyani_dataset_2020}. The Breast Ultrasound Images (BUSI) dataset contains 780 images of size 500$\times$500 pixels, collected in 2018 at Baheya Hospital in Egypt from women aged between 25 and 75 years. Although the dataset categorizes images into normal, benign, and malignant classes, we focus solely on the benign and malignant classes because the normal class does not contain tumor structures. We combined 210 malignant tumor images with \newnew{437} benign images, forming a single tumor class comprising \newnew{647} images. We randomly partitioned the data into \newnew{450} images for training, 99 for validation, and 98 for evaluation.

We trained the UNeXt model using the implementation provided at \url{https://github.com/jeya-maria-jose/UNeXt-pytorch}, without modifying the original code. We resized the BUSI images to $256\times256$, and we retained the hyperparameters from~\citet{valanarasu_unext_2022}. Specifically, we used the Adam optimizer with an initial learning rate of $1\times10^{-4}$ and momentum of $0.9$, along with a cosine annealing scheduler that decayed the learning rate to $1\times10^{-5}$. We trained the model for 400 epochs using a mini-batch size of 8, and optimized it using a combined binary cross-entropy and Soft Dice loss.

\subsection{Skin Cancer Segmentation}



Skin cancer is one of the most prevalent cancers globally, and melanoma—known for its high fatality rate among cutaneous malignancies—necessitates early detection to reduce mortality~\citet{codella_skin_2018}. The International Skin Imaging Collaboration (ISIC) organizes an annual challenge comprising multiple tasks. In the 2018 edition, the challenge included lesion segmentation, feature detection, and disease classification; in this study, we focus on lesion segmentation. The original dataset was divided into a training set of 2,594 images, a validation set of 100 images, and a holdout test set of 1,000 images. To ensure that the entire dataset comes from the same distribution, we merged these subsets and randomly repartitioned them into 2,155 images for training, 539 for validation, and 1,000 for evaluation.

We trained the UNeXt architecture using the implementation from \url{https://github.com/jeya-maria-jose/UNeXt-pytorch}. As with the BUSI dataset, we kept the original hyperparameters from~\citet{valanarasu_unext_2022}. We resized all images to $512\times512$ and used the Adam optimizer with an initial learning rate of 0.0001 and a momentum of 0.9. We applied a cosine annealing scheduler to decay the learning rate to 0.00001. The loss function combined 0.5 times binary cross-entropy with Soft Dice loss. We trained the model for 400 epochs with a batch size of 8.

\subsection{Multiple Sclerosis White Matter Lesion (MSWML) Segmentation}



The segmentation of white matter lesions caused by multiple sclerosis helps physicians follow the progress of the disease through the evolution of the lesions~\citep{thompson_diagnosis_2018}. We used the datasets and pretrained baseline segmentation model provided in the Shifts 2.0 challenge~\citep{malinin_shifts_2022}. Specifically, we used the ISBI~\citep{carass_longitudinal_2017} and MSSEG-1~\citep{commowick_objective_2018} datasets as our in-distribution data, while we treated the PubMRI dataset~\citep{lesjak_novel_2018} as out-of-distribution. For ID evaluation, we used the original $\text{Evl}_{\text{in}}$ set from Shifts 2.0, and for OOD evaluation, we used the $\text{Dev}_{\text{out}}$ set. We did not include the $\text{De}_{\text{in}}$ set in our evaluation as it corresponds to the Shifts 2.0 validation set.

We used the pretrained 3D U-Net baseline model available at \url{https://github.com/Shifts-Project/shifts}. The original model was trained using the Adam optimizer~\citep{adam} with an initial learning rate of $1\times10^{-3}$ and a polynomial decay schedule. It used a batch size of 2 and was trained for 15,000 iterations (approximately 30 epochs) with a loss composed of voxel-wise cross-entropy and Soft Dice losses. The model also employed data augmentation, including random rotations, flips, and intensity scaling.

\subsection{Optic Cup Segmentation}\label{ssec:optic-cup}

Accurate segmentation of the optic cup and disc in color fundus photography (CFP) is essential for the early diagnosis of glaucoma and the prevention of irreversible vision loss~\citep{orlando_refuge_2020}. To address this task, we used three publicly available datasets: REFUGE~\citep{orlando_refuge_2020} (1200 labeled images), ORIGA~\citep{zhuo_zhang_origa-light_2010} (650 images), and G1020~\citep{bajwa_g1020_2020} (1020 images). For ID evaluation, we merged all subsets of the REFUGE dataset and randomly divided the 1200 labeled images into 960 for training and 240 for evaluation. We used the entire ORIGA and G1020 datasets for OOD evaluation. \new{These datasets were collected in different hospitals and under different acquisition conditions, including different camera devices and imaging protocols~\citep{bajwa_g1020_2020}.}

Following~\citet{li_medical_2021}, we preprocessed all images and corresponding masks using the MNet DeepCDR model~\citep{fu_disc-aware_2018} to crop a region of interest around the optic disc and cup. We trained the Segtran model~\citep{li_medical_2021}, following the implementation and hyperparameters described by the authors. Specifically, we used the AdamW optimizer~\citep{adamw} with a learning rate of $2\times10^{-4}$ and trained the model for 10{,}000 iterations (approximately 27 epochs) with a batch size of 4. We used the average of pixel-wise cross-entropy and Soft Dice loss functions for training. The Segtran architecture included three transformer layers with Expanded Attention Blocks using 4 modes ($N_m=4$) and EfficientNet-B4 as the backbone. Additional configuration details are available at \url{https://github.com/askerlee/segtran}.

\subsection{Polyp Segmentation}

Polyp segmentation in colonoscopy images facilitates the identification and assessment of polyps, a crucial step in preventing colorectal cancer~\citep{jha_kvasir-seg_2019}. In this study, we constructed the polyp segmentation dataset as described in \citet{dong_polyp-pvt_2023}. The training set comprises 900 images from KvasirSEG~\citep{jha_kvasir-seg_2019} and 548 images from ClinicDB~\citep{ClinicDB}, while the ID evaluation set includes the remaining 100 images from KvasirSEG and 64 from ClinicDB. The OOD dataset contains 196 images from ETIS~\citep{silva_toward_2014}, 380 from ColonDB~\citep{tajbakhsh_automated_2016}, and 60 from EndoScene~\citep{vazquez_benchmark_2017}.

For polyp segmentation we employed the results from the Polyp-PVT model, a deep learning architecture based on the pyramid vision transformer (PVT). We did not train the Polyp-PVT model; instead, we used the precomputed predictions and corresponding ground truth masks provided at \url{https://github.com/DengPingFan/Polyp-PVT}. The architecture, weights, and predictions are described in detail in~\citet{dong_polyp-pvt_2023}.


\vskip 0.2in
\bibliography{references,newbib}

\end{document}